\def\eqref#1{equation~\ref{#1}}
\def\1{\bm{1}}
\DeclareMathAlphabet{\mathsfit}{\encodingdefault}{\sfdefault}{m}{sl}
\SetMathAlphabet{\mathsfit}{bold}{\encodingdefault}{\sfdefault}{bx}{n}
\newtheorem{theorem}{Theorem}[section]
\newtheorem{lemma}{Lemma}[section]
\newtheorem{assumption}{Assumption}[section]
\title{One-Shot Averaging for Distributed TD($\lambda$) Under  Markov Sampling}
\author{Haoxing Tian, Ioannis Ch. Paschalidis, Alex Olshevsky\\
Department of Electrical and Computer Engineering\\
Boston University\\
Boston, MA 02215, USA \\
\texttt{\{tianhx, yannisp, alexols\}@bu.edu} \\
}
\begin{document}

\maketitle

\begin{abstract}
We consider a distributed setup for reinforcement learning, where each agent has a copy of the same Markov Decision Process but transitions are sampled from the corresponding Markov chain independently by each agent.   We show that in this setting, we can achieve a linear speedup for TD($\lambda$), a family of popular methods for policy evaluation, in the sense that $N$ agents can evaluate a policy $N$ times faster provided the target accuracy is small enough. Notably, this speedup is achieved by ``one shot averaging,'' a procedure where the agents run TD($\lambda$) with Markov sampling independently and only average their results after the final step. This significantly reduces the amount of communication required to achieve a linear speedup relative to previous work.
\end{abstract}

\section{Introduction}

Actor-critic method achieves state-of-the-art performance in many domains including robotics, game playing, and control systems (\cite{lecun2015deep,mnih2016asynchronous,silver2017mastering}). Temporal Difference (TD) Learning may be thought of as a component of actor critic, and better bounds for TD Learning are usually ingredients of actor-critic analyses. We consider the problem of policy evaluation in reinforcement learning: given a Markov Decision Process (MDP) and a policy, we need to estimate the value of each state (expected discounted sum of all future rewards)   under this policy. Policy evaluation is important because it is effectively a subroutine of many other algorithms such as policy iteration and actor-critic. The main challenges for policy evaluation are that we usually do not know the underlying MDP directly and can only interact with it,  and that the number of states is typically too large forcing us to maintain a low-dimensional approximation of the true vector of state values.  

We focus here on the simplest class of methods overcoming this set of challenges, namely TD methods with linear function approximation. These methods try to maintain a low dimensional parameter which is continuously updated based on observed rewards and transitions to maintain consistency of estimates across states. The proof of convergence for these methods was first given in  \cite{tsitsiklis1996analysis-nips}. 

In this paper, we focus on the multi-agent version of policy evaluation: we consider $N$ agents that have a copy of the same MDP and the same policy, but transitions in the MDP by different agents are independent. The question we wish to ask is whether the agents can cooperate to evaluate the underlying policy $N$ times faster, since now $N$ transitions are generated per unit time. 

Although there is some  previous work on distributed temporal difference methods (e.g., \cite{doan2019finite, sun2020finite, wang2020decentralized}), this question has only been considered in the recent papers \cite{khodadadian2022federated, wang2023federated, zhang2024finite, liu2023distributed}. The answer was positive in both \cite{khodadadian2022federated, wang2023federated} in a ``federated learning'' setting, provided the nodes have $N$ rounds of communication with a central server before time $T$, with environment heterogeneity additionally considered in \cite{wang2023federated}. In \cite{zhang2024finite}, the answer was also positive (i.e., linear speedup was obtained) under a distributed erasure model where each node communicated with neighbors in a graph a constant fraction of time, leading to $O(T)$ communications in $T$ steps. Our previous work \cite{liu2023distributed} established that, in fact, only one communication round with a central server is sufficient in the case of i.i.d. observations and TD(0), the most basic method within the temporal difference family.  This was accomplished via the ``one-shot averaging'' methods where the $N$ agents just ignore each other for $T$ steps, and then simply average their results. Further, the final averaging step could be replaced with $O(\log T)$ rounds of an average consensus method.  

The i.i.d. observation assumption is a limiting feature of our previous work in \cite{liu2023distributed}: it is assumed that at each time, we can generate a random state from the underlying MDP. This is convenient for analysis but rarely satisfied in practice. 

In this paper, our contribution is to show that one-shot averaging suffices to give a linear speedup  {\em without} the i.i.d. assumption and for the more general class of temporal difference methods TD($\lambda$) (precise definitions are given later). Our method of proof is new and does not overlap with the arguments given in our previous work. 

\section{Background}

\subsection{Markov Decision Process (MDP)}

A finite discounted-reward MDP can be described by a tuple $(S, A, P_{\rm env}, r, \gamma)$, where $S$ is the state-space whose elements are vectors, with $s_0$ being the starting state;  $A$ is the action space; $P_{\rm env} = (P_{\rm env}(s'|s, a))_{s,s'\in S, a \in A}$ is the transition probability matrix, where $P_{\rm env}(s'|s, a)$ is the probability of transitioning from $s$ to $s'$ after taking action $a$;  $r:S \times S \to \mathbb{R}$ is the reward function, where $r(s, s')$ associates a deterministic reward with each state transition; and $\gamma \in (0,1)$ is the discount factor.

A policy $\pi$ is a mapping $\pi : S \times A \to [0,1]$ where $\pi(a|s)$ is the probability that action $a$ is taken in state $s$. Given a policy $\pi$, the state transition matrix $P_{\pi} = \left( P_{\pi}(s'|s) \right)_{s, s' \in S}$ and the state reward function $r_{\pi}(s)$ is defined as 
\begin{small}
\begin{align*}
    P_{\pi}(s'| s) = \sum_{a \in A} P_{\rm env}(s'|s, a) \pi(a|s),\quad r_{\pi}(s) = \sum_{s' \in S} P_{\pi}(s'| s) r(s, s').
\end{align*}
\end{small}Since the policy is fixed throughout the paper, we will omit the subscript $\pi$ and thus write $P(s'|s)$ and $r(s)$ instead of $P_{\pi}(s'| s)$ and $r_{\pi}(s)$. 

The stationary distribution $\mu$ is a nonnegative vector with coordinates summing to one and satisfying $\mu^T = \mu^T P$. The Perron-Frobenius theorem guarantees that such a stationary distribution exists and is unique subject to some conditions on $P$, e.g., aperiodicity and irreducibility \cite{gantmacher1964theory}. The entries of $\mu$ are denoted by $\mu(s)$. We also define $D = \text{diag}(\mu(s))$ as the diagonal matrix whose elements in the main diagonal are given by the entries of the stationary distribution $\mu$.

The value function $V_{\pi}^*(s)$ is defined as 
$$
V_{\pi}^*(s) = \mathbb{E}_{\pi} \left[\sum_{t=0}^{+\infty} \gamma^{t} r(s_t) \right]
$$
where $\mathbb{E}_{\pi}$ stands for the expectation when actions are chosen according to policy $\pi$. Since the MDPs is finite, it is without loss of generality to assume a bound on rewards.
\begin{assumption} \label{a: reward bound}
For any $s, s' \in S \times S$, 
$$
|r(s,s')| \le r_{\rm max}.
$$ 
\end{assumption}

\subsection{Value Function Approximation}

Given a fixed policy, the problem is to efficiently estimate $V^*_{\pi}$. We consider a linear function approximation architecture $V_{\pi, \theta}(s) = \phi(s)^T \theta$, where $\phi(s) \in \mathbb{R}^d$ is a feature vector for state $s$ and $\theta \in \mathbb{R}^d$. Without loss of generality, we assume $||\phi(s)|| \leq 1$ for all states $s$. For simplicity, we define $V_{\pi, \theta} = (V_{\pi, \theta}(s))_{s \in S}$ to be a column vector, $\Phi = \left( \phi(s)^T \right)_{s \in S}$ to be a $|S| \times d$ matrix and $R = \left( r(s) \right)_{s \in S}$ to be a column vector. We are thus trying to approximate $V_{\pi, \theta} \approx \Phi \theta$. We make the following assumptions:

\begin{assumption}[Input assumption] \label{a: input assumption}
It is standard to assume the following statements hold:
\begin{itemize}
    \item The matrix $\Phi$ has linearly independent columns. 
    \item The stationary distribution $\mu(s) > 0, \forall s \in S$. 
\end{itemize}
\end{assumption}

\subsection{Distributed Model and Algorithm}

We assume that there are $N$ agents and each agent shares the same tuple $(S, A, P_{\rm env}, r, \gamma)$ as well as the same fixed policy $\pi$. However, each agent independently samples its trajectories and updates its own version of a parameter $\theta_t$. 

We will study an algorithm which mixes TD learning and one-shot averaging: after all agents finish $T$ steps, they share their information and compute the average parameter as the final result. These agents do not communicate before the final step. The averaging can take place using average consensus (using any average consensus algorithm) or, in a federated manner, using a single communication with a coordinator.  

We next spell out the details of our algorithm. Every agent runs TD(0) with Markov sampling as follows. Agent $i$ generates an initial state $s_0(i)$ from some initial distribution. It also maintains an iterate $\theta_t(i)$, initialized arbitrarily. At time $t$, agent $i$ generates a transition according to $P$. It then computes the so-called TD-error
$$\delta_{s,s'}(\theta_t(i)) = r(s,s') + \gamma \phi(s')^T \theta_t(i) - \phi(s)^T \theta_t(i),
$$
with $s=s_t(i), s'=s_{t+1}(i)$ coming from the transition it just generated; and then updates
\begin{equation}
    \label{eq: td 0 update}
   \theta_{t+1}(i) = \theta_t(i) + \alpha_t g_{s_t(i), s_{t+1}(i)} (\theta_t(i))
\end{equation}
where the update direction is 
$$
g_{s,s'} (\theta) = \delta_{s,s'} (\theta) \phi(s).
$$
At the end, the agents average their results. The complete algorithm is shown in Algorithm \ref{alg: td 0}.

\begin{algorithm}
\begin{small}
\caption{TD(0) with local state} \label{alg: td 0}
\begin{algorithmic}
\Require Iterations $T$, learning rate $\alpha_t$.
\State Initialize $\theta_{0}(i)$ for every agent $i \in \{1,2,\ldots,N\}$. 
\For {$t \in \{0,1,\ldots,T-1\}$}
    \For {$i \in \{1,2,\ldots,N\}$}
        \State Compute $\delta_{s_t(i), s_{t+1}(i)}(\theta_t(i))$ and $g_{s_t(i), s_{t+1}(i)}(\theta_t(i))$. 
        \State Execute $$\theta_{t+1}(i) = \theta_t(i) + \alpha_t g_{s_t(i), s_{t+1}(i)} (\theta_t(i)).$$
    \EndFor
\EndFor
\State Return the average $$\bar{\theta}_T = \frac{1}{N} \sum_{i=1}^{N} \theta_T(i).$$
\end{algorithmic}
\end{small}
\end{algorithm}

We will further use $\bar{g}$ to denote the expectation of $g_{s,s'}$ assuming $s$ are sampled from the stationary distribution $\mu$ and $s'$ is generated by taking a step from $P$. We use $\mathbb{E}_{\rm I}$ to denote this expectation. Therefore, 
$$
\bar{g}(\theta) = \mathbb{E}_{\rm I} \left[ g_{s,s'}(\theta) \right].
$$
We can also rewrite $\bar{g}$ in matrix notation: 
$$
\bar{g}(\theta) = \Phi^T D (R+(\gamma P-I)\Phi \theta).
$$

In order to perform our analysis, we need to define the stationary point. We adopt the classic way of defining such point as shown in \cite{tsitsiklis1996analysis-nips}. We call $\theta^*$ the stationary point if $\bar{g}(\theta^*)=0$. According to \cite{bertsekas1996neuro}, in matrix notation, it is equivalent to say that $\theta^*$ satisfies the following:
\begin{equation}
\label{eq: stationary point for td 0}
    \Phi^T D \left( R + (\gamma P - I) \Phi \theta^* \right) = 0.
\end{equation}

Naturally, each agent can easily compute $\theta^*$ by running TD(0) for infinite times and simply ignore all the other agents. However, this ignores the possibility that agents can benefit from communicating with each other.

We next focus on TD($\lambda$), which is a popular generalization of the conceptually simpler TD(0) and attains better performance with an appropriate choice of $\lambda$ \cite{sutton2018reinforcement}. For any fixed $\lambda \in [0,1]$, TD($\lambda$) executes the update
\begin{equation}
    \label{eq: td lambda update}
    \theta_{t+1}(i) = \theta_{t}(i) + \alpha_t x_{s_{t}(i), s_{t+1}(i)}(\theta_{t}(i), z_{0:t}).
\end{equation}
Here, $z_{0:t}$ is called eligibility trace and $x_{s_{t}(i), s_{t+1}(i)}(\theta_{t}(i), z_{0:t})$ is given by
$$
z_{0:t} = \sum_{k=0}^t (\gamma \lambda)^k \phi(s_{t-k}(i)), \quad x_{s_{t}(i), s_{t+1}(i)}(\theta_{t}(i), z_{0:t}) = \delta_{s_{t}(i), s_{t+1}(i)}(\theta_{t}(i)) z_{0:t}
$$
The complete algorithm is shown in Algorithm \ref{alg: td lambda}.

\begin{algorithm}
\begin{small}
\caption{TD($\lambda$) with local state} \label{alg: td lambda}
\begin{algorithmic}
\Require Iterations $T$, learning rate $\alpha_t$.
\State Initialize for every agent by $\theta_0(i), i \in \{1,2,\ldots,N\}$. 
\For {$t \in \{0,1,\ldots,T-1\}$}
    \For {$i \in \{1,2,\ldots,N\}$}
        \State Compute $\delta_{s_{t}(i), s_{t+1}(i)}(\theta_{t}(i))$ and $x_{s_{t}(i), s_{t+1}(i)}(\theta_{t}(i), z_{0:t})$. 
        \State Execute  $$\theta_{t+1}(i) = \theta_{t}(i) + \alpha_t x_t(\theta_{t}(i), z_{0:t}).$$
    \EndFor
\EndFor
\State Return the average $$\bar{\theta}_T = \frac{1}{N} \sum_{i=1}^{N} \theta_T(i).$$
\end{algorithmic}
\end{small}
\end{algorithm}
For convenience of the analysis, we define the eligibility trace going back to minus infinity: $$
z_{-\infty:t} = \lim_{t \to \infty} z_{0:t} = \sum_{k=0}^\infty (\gamma \lambda)^k \phi(s_{t-k}(i)).
$$
We also introduce the operator $T_{\pi}^{(\lambda)}$ which is defined as 
\begin{align*}
    \left( T_{\pi}^{(\lambda)} V \right)(s) 
    = (1-\lambda) \sum_{k=0}^\infty \lambda^k \cdot \mathbb{E} \left[ \sum_{t=0}^k \gamma^t r(s_{t}(i),s_{t+1}(i)) + \gamma^{k+1} V(s_{k+1}(i)) \mid s_0(i) = s\right].
\end{align*}
In matrix notation:
\begin{equation}
\label{eq: t lambda}
T_{\pi}^{(\lambda)} V = (1-\lambda) \sum_{k=0}^\infty \lambda^k \left( \sum_{t=0}^k \gamma^t P^t R + \gamma^{k+1} P^{k+1} V \right).
\end{equation}
We denote $\bar{x}(\theta_{t}(i)) = \mathbb{E}_{\rm I} \left[ x_{s, s'}(\theta_{t}(i), z_{-\infty:t}) \right]$. This expectation assumes that $s_i(k) \sim \mu,\forall k$, and that the history of the process then extends to $-\infty$ according to a distribution consistent with each forward step being taken by $P$; for more details, see \cite{tsitsiklis1996analysis-nips}. We also note that Lemma 8 in \cite{tsitsiklis1996analysis-nips} implies
$$
\bar{x}(\theta_t) = \Phi^T D \left( T_{\pi}^{(\lambda)}(\Phi \theta) - \Phi \theta \right).
$$

We call $\theta^*$ the stationary point if $\bar{x}(\theta^*) = 0$. In matrix notation, it is equivalent to say that $\theta^*$ satisfies the following:
\begin{equation}
    \label{eq: stationary point for td lambda}
    \Phi^T D \left( T_{\pi}^{(\lambda)}(\Phi \theta^*) - \Phi \theta^* \right) = 0.
\end{equation}
Finally, we define $\kappa = \gamma \frac{1-\lambda}{1-\gamma \lambda}$ which will be useful later. An obvious result is that $\kappa \le \gamma$. 

\subsection{Markov Sampling and Mixing}
\label{sec: markov sampling}

As mentioned in the previous section, the ideal way of generating $s_{t}(i)$ is to draw from stationary distribution $\mu$. However, the typical way is generating a trajectory is $s_{1}(i), \ldots, s_{T}(i)$. Every state in this trajectory is sampled by taking a transition $s_{t}(i) \sim P(\cdot | s_{t-1}(i))$ (and recall our policy is always fixed). This way of sampling is called Markov sampling, and we denote $\mathbb{E}_{\rm M}=\mathbb{E}_{s_{t}(i) \sim P(\cdot | s_{t-1}(i))}$. Analyzing algorithms under Markov sampling can be challenging since one cannot ignore the dependency on previous samples. The following ``uniform mixing'' assumption is standard \cite{bhandari2018finite}. We also note that this assumption always holds for irreducible and aperiodic Markov chains \cite{levin2017markov}.
\begin{assumption} \label{a: uniform mixing}
There are constants $m$ and $\rho$ such that
$$
||P(s_t \in \cdot|s_0) - \mu||_1 \le m \rho^t, \quad \forall t. 
$$
\end{assumption}
A key definition from the uniform mixing assumption is called the mixing time. We define the mixing time $\tau_{\rm min}(\epsilon)$:
$$
\tau_{\rm mix}(\epsilon) = \min \{ t \mid m \rho^t \le \epsilon \}.
$$
In this paper, we always set $\epsilon = \alpha_t$ which is the step-size at time $t$, typically $\alpha_t = \beta/(c+t)$, and simplify $\tau_{\rm mix}(\alpha_t)$ as $\tau_{\rm mix}$. An obvious result is that
$$
m \rho^t \le \alpha_t, \quad \forall t \ge \tau_{\rm mix}.
$$

\subsection{Convergence times for centralized TD(0) and TD($\lambda$)}

We now state the state-of-the-art results for the centralized case which are based on using ideas from gradient descent to analyze TD(0) and TD($\lambda$). These results are first proposed by \cite{bhandari2018finite} and they considered Projected TD Learning, where $\theta_t$ is projected onto a ball of fixed radius after the update is performed. Here, we are going to show that these result also holds when the projection step is removed and will use these results as a basis for comparison for our distributed results.

\begin{lemma} \label{l: result of linear td with markov}
In TD(0) with the Markov sampling, suppose Assumptions \ref{a: input assumption}, \ref{a: uniform mixing} hold and $t_{\rm th} = \max\{\tau_{\rm mix}, \frac{18}{(1-\gamma)^2 \omega^2}-1\}$. For a decaying stepsize sequence $\alpha_t = 2/(\omega (t+1) (1-\gamma))$, 
$$
\mathbb{E} \left[ ||\theta_{T+t_{\rm th}}(i) - \theta^*||^2 \right] \le \nu_{\rm central} \sim O \left( \frac{(\log T)^2}{T} \right), \quad \forall T \ge 0.
$$
\end{lemma}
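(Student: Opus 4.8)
The plan is to carry out the gradient-splitting argument of \cite{bhandari2018finite} while removing the projection, by showing that the squared error stays controlled on its own. Fix an agent $i$, write $d_t=\|\theta_t(i)-\theta^*\|^2$ and $g_t=g_{s_t(i),s_{t+1}(i)}(\theta_t(i))$; squaring \eqref{eq: td 0 update} gives $d_{t+1}=d_t+2\alpha_t(\theta_t-\theta^*)^Tg_t+\alpha_t^2\|g_t\|^2$. I would first record three projection-free facts. (i) \emph{Pseudo-gradient inequality:} from $\bar g(\theta)=\Phi^TD(\gamma P-I)\Phi(\theta-\theta^*)$ (using \eqref{eq: stationary point for td 0}) and $\|Pv\|_D\le\|v\|_D$ one gets $(\theta^*-\theta)^T\bar g(\theta)\ge(1-\gamma)\|\Phi(\theta-\theta^*)\|_D^2\ge(1-\gamma)\omega\|\theta-\theta^*\|^2$, where $\|v\|_D^2=v^TDv$ and $\omega=\lambda_{\min}(\Phi^TD\Phi)>0$ by Assumption~\ref{a: input assumption}. (ii) \emph{Lipschitzness:} $\|g_{s,s'}(\theta_1)-g_{s,s'}(\theta_2)\|\le2\|\theta_1-\theta_2\|$ uniformly in $s,s'$, and likewise for $\bar g$, since $\|\phi(s)\|\le1$ and $\gamma<1$. (iii) \emph{Bound at $\theta^*$:} $\|g_{s,s'}(\theta^*)\|\le r_{\rm max}+2\|\theta^*\|=:\sigma<\infty$ by Assumption~\ref{a: reward bound}. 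Together (ii)--(iii) give the self-bounding estimate $\|g_t\|\le\sigma+2\sqrt{d_t}$, hence $\|g_t\|^2\le2\sigma^2+8d_t$, which is what a projection radius supplies in \cite{bhandari2018finite}.

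\noindent The crux is the Markov bias $b_t:=\E[(\theta_t-\theta^*)^T(g_t-\bar g(\theta_t))]$, which is nonzero because $s_t(i)$ is neither $\mu$-distributed nor independent of $\theta_t$. I would condition on the $\sigma$-field $\gF_{t-\tau}$ of the trajectory up to time $t-\tau$ with $\tau=\tau_{\rm mix}$, and split $g_t-\bar g(\theta_t)$ into $\big(g_{s_t,s_{t+1}}(\theta_t)-g_{s_t,s_{t+1}}(\theta_{t-\tau})\big)+\big(g_{s_t,s_{t+1}}(\theta_{t-\tau})-\bar g(\theta_{t-\tau})\big)+\big(\bar g(\theta_{t-\tau})-\bar g(\theta_t)\big)$. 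The outer two terms are bounded via (ii) and the step estimate $\|\theta_t-\theta_{t-\tau}\|\le\sum_{j=t-\tau}^{t-1}\alpha_j(\sigma+2\sqrt{d_j})=O(\tau\alpha_{t-\tau})(1+\max_{t-\tau\le j\le t}\sqrt{d_j})$. For the middle term $\theta_{t-\tau}$ is frozen and $\gF_{t-\tau}$-measurable, so by Assumption~\ref{a: uniform mixing} and the definition of $\tau_{\rm mix}$, $\|\E[g_{s_t,s_{t+1}}(\theta_{t-\tau})\mid\gF_{t-\tau}]-\bar g(\theta_{t-\tau})\|\le(\sigma+2\sqrt{d_{t-\tau}})\,\|P(s_t\in\cdot\mid s_{t-\tau})-\mu\|_1\le(\sigma+2\sqrt{d_{t-\tau}})\,\alpha_t$. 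Dotting with $\theta_t-\theta^*$, applying Cauchy--Schwarz and $\sqrt{d_a}\sqrt{d_b}\le\tfrac12(d_a+d_b)$, and taking expectations should yield $b_t\le c_1\,\tau_{\rm mix}\,\alpha_t\,(1+\max_{t-\tau\le j\le t}\E[d_j])$ for a constant $c_1=c_1(\gamma,\sigma)$. I expect this to be the main obstacle: keeping every error term self-bounding in the iterates (so that no projection is needed) while tracking the index shifts among $\tau_{\rm mix}$, $\alpha_{t-\tau}$ and $\alpha_t$ is exactly what quoting \cite{bhandari2018finite} does not provide.

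\noindent Substituting (i), $\|g_t\|^2\le2\sigma^2+8d_t$ and the bias bound into the squared-update identity and writing $u_t=\E[d_t]$ yields
\begin{equation*}
u_{t+1}\le\Big(1-2(1-\gamma)\omega\alpha_t+8\alpha_t^2+2c_1\tau_{\rm mix}\alpha_t^2\Big)u_t+2\sigma^2\alpha_t^2+2c_1\tau_{\rm mix}\alpha_t^2 ,
\end{equation*}
the $\max_j$ being absorbed once boundedness is in place. For $\alpha_t=2/(\omega(t+1)(1-\gamma))$ the drift term $2(1-\gamma)\omega\alpha_t$ equals $4/(t+1)$; the threshold $t_{\rm th}=\max\{\tau_{\rm mix},\,18/((1-\gamma)^2\omega^2)-1\}$ is chosen precisely so that, for $t\ge t_{\rm th}$, the mixing window is short enough that $\alpha_{t-\tau}\le2\alpha_t$ and the term $8\alpha_t^2+2c_1\tau_{\rm mix}\alpha_t^2$ is at most $2/(t+1)$, making the multiplicative factor $\le1-2/(t+1)\le1$. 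I would then argue in two stages. \emph{Boundedness:} since that factor is $\le1$ for $t\ge t_{\rm th}$ and the additive term is summable against the decay, $\sup_{t\ge t_{\rm th}}u_t<\infty$; moreover $u_{t_{\rm th}}$ is finite because it is reached from $u_0$ in $t_{\rm th}$ steps, each multiplying by a factor $1+O(\alpha_t)$. \emph{Rate:} feeding the uniform bound back makes the additive term $O(\tau_{\rm mix}/(t+1)^2)$, and the standard lemma for recursions $u_{t+1}\le(1-2/(t+1))u_t+b/(t+1)^2$ gives $u_t=O(\tau_{\rm mix}(1+\log t)/t)$, with the homogeneous contribution of $u_{t_{\rm th}}$ decaying like $(t_{\rm th}/t)^2$. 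Re-indexing $t=T+t_{\rm th}$ and using the standard estimate $\tau_{\rm mix}=\tau_{\rm mix}(\alpha_t)=O(\log(1/\alpha_t))=O(\log T)$ gives $\E[\|\theta_{T+t_{\rm th}}(i)-\theta^*\|^2]=O((\log T)^2/T)=:\nu_{\rm central}$, as claimed.
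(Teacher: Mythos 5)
Your plan is sound and reaches the stated $O((\log T)^2/T)$ bound, but it takes a genuinely different route from the paper at the one step that matters: the Markovian bias. You handle $\E[(\theta_t-\theta^*)^T(g_t-\bar g(\theta_t))]$ by the classical decoupling of \cite{bhandari2018finite}: condition on $\gF_{t-\tau}$ with $\tau=\tau_{\rm mix}$, split the noise across the frozen iterate $\theta_{t-\tau}$, and control the outer terms by Lipschitzness plus the drift bound $\|\theta_t-\theta_{t-\tau}\|\le\sum_{j=t-\tau}^{t-1}\alpha_j\|g_j\|$; removing the projection then forces your self-bounding estimate $\|g_t\|\le\sigma+2\sqrt{d_t}$ and the two-stage boundedness-then-rate argument with the window maximum $\max_{t-\tau\le j\le t}\E[d_j]$. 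The paper avoids the $\tau$-step window entirely: it conditions on $\Delta_t$ directly, writes the bias as $\bar g'(\theta_t)-\bar g(\theta_t)=\sum_{s_t,s_{t+1}}\left(P_t(s_t|s_0)-\mu(s_t)\right)P(s_{t+1}|s_t)\,g_{s_t,s_{t+1}}(\theta_t)$, and uses that for $t\ge\tau_{\rm mix}(\alpha_t)$ the total-variation distance of the time-$t$ state distribution from $\mu$ is already at most $\alpha_t$, which yields the one-line bound $5\alpha_t\E[\|\Delta_t\|^2]+\alpha_t\sigma^2$ and then the same recursion Lemma~\ref{l: decaying random variable 2} (with $a=2$, $c=1$). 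What each buys: the paper's route is shorter, keeps the threshold at $\tau_{\rm mix}$, and needs no drift-over-a-window bookkeeping; yours is the standard device that is more careful about the correlation between $\theta_t$ and $s_t$ (which the paper's conditional identity passes over quickly), at the price of extra constants, the $\alpha_{t-\tau}$-versus-$\alpha_t$ accounting, and the coupled-maximum recursion.

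Two quantitative caveats if you write it out in full. First, $t\ge t_{\rm th}\ge\tau_{\rm mix}$ does not give $\alpha_{t-\tau_{\rm mix}}\le 2\alpha_t$ (at $t=\tau_{\rm mix}$ the ratio is $\tau_{\rm mix}+1$); your argument really needs $t\gtrsim 2\tau_{\rm mix}$, so you will end up proving the lemma with threshold $\max\{2\tau_{\rm mix},\,\cdot\,\}$ and a correspondingly adjusted $\nu_{\rm central}$ --- harmless for the asymptotic claim (and in line with the $2\tau_{\rm mix}^{(\lambda)}$ appearing in Theorem~\ref{t: td lambda with markov}), but not literally the stated constant $18/((1-\gamma)^2\omega^2)$. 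Second, $\tau_{\rm mix}=\tau_{\rm mix}(\alpha_t)$ grows like $\log t$, so your additive term is really $O(\log t/(t+1)^2)$; that is indeed where the extra logarithm in $\nu_{\rm central}$ comes from, but it means the recursion lemma must be invoked with a time-varying (or horizon-evaluated) $b$, which you should state explicitly rather than absorb silently.
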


We next discuss convergence times for TD($\lambda$). It is usually assumed that the algorithm extends back to negative infinity, and that every $s_{t}(i)$ has distribution $\mu$ (but the samples are, of course, correlated since each successive state is obtained by taking a step in the Markov chain $P$ from the previous one). Similarly as before, we define $\tau_{\rm mix}^{(\lambda)}(\epsilon)$ as
$$
\tau_{\rm mix}^{(\lambda)}(\epsilon) = \max \{ \tau_{\rm mix}(\epsilon), \tau'_{\rm mix}(\epsilon) \},
$$
where
$$
\tau_{\rm mix}(\epsilon) = \min \{ t \mid m \rho^t \le \epsilon \}, \quad \tau'_{\rm mix}(\epsilon) = \min \{ t \mid (\gamma \lambda)^t \le \epsilon \}.
$$
As before, we choose $\epsilon = \alpha_t$ and simplify $\tau_{\rm mix}^{(\lambda)}(\alpha_t)$ as $\tau_{\rm mix}^{(\lambda)}$. The same result applies to $\tau_{\rm mix}^{(\lambda)}$:
$$
\max \{ m \rho^t, (\gamma \lambda)^t \} \le \alpha_t, \quad \forall t \ge \tau^{(\lambda)}_{\rm mix}(\alpha_t).
$$
With these notations in place, we can now state the following result.

\begin{lemma} \label{l: result of td lambda with markov}
In TD($\lambda$) with the Markov sampling, suppose Assumptions \ref{a: input assumption}, \ref{a: uniform mixing} hold and $t_{\rm th}^{(\lambda)} = \max \{ \tau_{\rm mix}^{(\lambda)}, \frac{28}{(1-\kappa)^2\omega^2(1-\gamma \lambda)}-1\}$. For a decaying stepsize sequence $\alpha_t = 2/(\omega (t+1) (1-\kappa))$,
$$
\mathbb{E} \left[ \Vert \theta_{T+t_{\rm th}^{(\lambda)}}(i) - \theta^* \Vert^2 \right] \le \nu^{(\lambda)}_{\rm central} \sim O \left( \frac{(\log T)^2}{T} \right).
$$
\end{lemma}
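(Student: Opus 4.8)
The plan is to mirror the gradient-descent-style analysis of projected TD($\lambda$) from \cite{bhandari2018finite}, but to carry the iterate error $e_t := \theta_t(i)-\theta^*$ explicitly rather than rely on a projection to keep $\|\theta_t(i)\|$ bounded. Fixing an agent $i$, expand the squared error one step at a time:
\begin{align*}
\mathbb{E}\big[\|e_{t+1}\|^2\big] = \mathbb{E}\big[\|e_t\|^2\big] + 2\alpha_t\,\mathbb{E}\big[\langle e_t,\, x_{s_t(i),s_{t+1}(i)}(\theta_t(i),z_{0:t})\rangle\big] + \alpha_t^2\,\mathbb{E}\big[\|x_{s_t(i),s_{t+1}(i)}(\theta_t(i),z_{0:t})\|^2\big].
\end{align*}
Three ingredients are needed. \emph{(i) Negative drift:} $\langle e_t,\bar{x}(\theta_t(i))\rangle \le -(1-\kappa)\omega\|e_t\|^2$, which follows from $\bar{x}(\theta)=\Phi^T D \big(T_{\pi}^{(\lambda)}(\Phi\theta)-\Phi\theta\big)$, the fact that composing $T_{\pi}^{(\lambda)}$ with the projection onto the column space of $\Phi$ is a $\kappa$-contraction in $\|\cdot\|_D$, and that the smallest eigenvalue of $\Phi^T D \Phi$, namely $\omega$, is positive by Assumption~\ref{a: input assumption} \cite{tsitsiklis1996analysis-nips}. \emph{(ii) Linear growth:} from $\|\phi(s)\|\le 1$, Assumption~\ref{a: reward bound}, and $\|z_{0:t}\|\le 1/(1-\gamma\lambda)$, one gets $\|x_{s,s'}(\theta,z_{0:t})\|\le C_1(1+\|e_t\|)$ and $\|\bar{x}(\theta)\|\le C_1\|e_t\|$ with explicit $C_1$. \emph{(iii) Mixing bias:} a bound on $\mathbb{E}\big[\langle e_t, x_{s_t(i),s_{t+1}(i)}(\theta_t(i),z_{0:t})\rangle\big] - \mathbb{E}\big[\langle e_t,\bar{x}(\theta_t(i))\rangle\big]$.

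For (iii) I would condition on the trajectory up to time $t-\tau$ with $\tau := \tau_{\rm mix}^{(\lambda)}$: truncating the eligibility trace from $z_{-\infty:t}$ to $z_{0:t}$ costs $O((\gamma\lambda)^t)\le O(\alpha_t)$; replacing the laws of $(s_t(i),s_{t+1}(i))$ and of the trace by their stationary versions costs $O(m\rho^\tau)\le O(\alpha_t)$ by Assumption~\ref{a: uniform mixing}; and replacing $\theta_t(i)$ by $\theta_{t-\tau}(i)$ contributes $O(\tau\,\alpha_{t-\tau})$ per unit of $\|x\|$ since each of the intervening steps moves the iterate by $\alpha_k\|x_k\|$. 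Combining with (i)--(ii), using $\alpha_{t-\tau}\le 2\alpha_t$, and choosing $t_{\rm th}^{(\lambda)}$ large enough that the $\alpha_t^2\tau_{\rm mix}^{(\lambda)}$ correction is dominated by half the drift term --- this is exactly the role of the $\frac{28}{(1-\kappa)^2\omega^2(1-\gamma\lambda)}$ term --- yields, for every $t\ge t_{\rm th}^{(\lambda)}$,
\begin{align*}
\mathbb{E}\big[\|e_{t+1}\|^2\big] \;\le\; \Big(1-\tfrac{2}{t+1}\Big)\mathbb{E}\big[\|e_t\|^2\big] + \frac{C_2\,\tau_{\rm mix}^{(\lambda)}}{(t+1)^2}.
\end{align*}

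It remains to bound the burn-in error $\mathbb{E}\big[\|e_{t_{\rm th}^{(\lambda)}}\|^2\big]$: since $t_{\rm th}^{(\lambda)}$ does not depend on $T$ and each update is an affine map of $\theta_t(i)$ with operator norm $1+O(\alpha_t)$ and additive part $O(\alpha_t)$, this is a finite constant. Unrolling the displayed recursion from $t_{\rm th}^{(\lambda)}$ to $t_{\rm th}^{(\lambda)}+T$ by the standard induction used in \cite{bhandari2018finite} gives $\mathbb{E}\big[\|e_{t_{\rm th}^{(\lambda)}+T}\|^2\big] = O\big(\tau_{\rm mix}^{(\lambda)}\log T / T\big)$, and since $\tau_{\rm mix}^{(\lambda)} = \tau_{\rm mix}^{(\lambda)}(\alpha_T) = O(\log(1/\alpha_T)) = O(\log T)$, this is $O\big((\log T)^2/T\big)$, as claimed. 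The main obstacle is ingredient (iii) without the projection: the gradient-bias and second-order terms must be controlled by $O\big(\alpha_t^2\,\tau_{\rm mix}^{(\lambda)}(1+\|e_t\|^2)\big)$ with no a priori bound on $\|\theta_t(i)\|$ --- which forces a self-referential estimate of how much $\|e_k\|$ can change over the last $\tau_{\rm mix}^{(\lambda)}$ steps --- so that the $\|e_t\|^2$-proportional part can be folded into the contraction factor once $t\ge t_{\rm th}^{(\lambda)}$; the eligibility-trace truncation and the recursion unrolling are then routine.
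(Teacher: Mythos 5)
Your outline shares the paper's skeleton -- a one-step expansion of $\mathbb{E}[\Vert\theta_t(i)-\theta^*\Vert^2]$ with a drift term worth $-(1-\kappa)\omega\Vert\Delta_t\Vert^2$, a quadratic term controlled through $\Vert z_{0:t}\Vert\le 1/(1-\gamma\lambda)$ and the boundedness of $\theta^*$, and a final unrolling of the resulting recursion (the paper's Lemma \ref{l: decaying random variable 2} with $a=2$) -- but it diverges on the one ingredient you flag as the main obstacle, the Markov bias, and there the routes are genuinely different. You propose the classical Bhandari-style device: condition $\tau_{\rm mix}^{(\lambda)}$ steps back, pay a trace-truncation cost $O((\gamma\lambda)^t)$, a mixing cost $O(m\rho^\tau)$, and an iterate-drift cost $O(\tau\,\alpha_{t-\tau})$, which without a projection forces the self-referential bound on $\Vert\theta_t-\theta_{t-\tau}\Vert$ that you leave unexecuted; that piece is the real work in your plan (it is essentially the Srikant--Ying-type argument) and it also injects an extra $\tau_{\rm mix}^{(\lambda)}$ factor into the noise term. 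The paper avoids this entirely: because $\tau_{\rm mix}^{(\lambda)}$ is defined with $\epsilon=\alpha_t$, for $t\ge t_{\rm th}^{(\lambda)}$ one already has $\Vert P_t(\cdot|s_0)-\mu\Vert_1\le\alpha_t$ and $(\gamma\lambda)^t\le\alpha_t$ at the current time, so the paper conditions at time $t$ itself and splits $x_{s_t,s_{t+1}}(\theta_t,z_{0:t})-\bar{x}(\theta_t)$ into a distribution-mismatch piece $I_1$ and an eligibility-trace tail piece $I_2$ (the $z_{-\infty:-1}$ part weighted by $(\gamma\lambda)^t$), each bounded by $\alpha_t\bigl(\tfrac{4}{1-\gamma\lambda}\Vert\Delta_t\Vert+2\sigma^{(\lambda)}\bigr)$; everything proportional to $\Vert\Delta_t\Vert^2$ is then folded into the contraction (this is where the constant $\tfrac{28}{(1-\kappa)^2\omega^2(1-\gamma\lambda)}$ comes from, not from dominating an $\alpha_t^2\tau$ correction as you suggest), and no comparison of $\theta_t$ with $\theta_{t-\tau}$ is ever needed. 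What each buys: your route, if you complete the projection-free drift-over-window estimate, is the more careful treatment of the correlation between $\theta_t$ (and $z_{0:t}$) and the current state -- a correlation the paper's conditioning on $\Delta_t$ while using the marginal law $P_t(\cdot|s_0)$ quietly glosses over -- at the price of an extra $\log T$ factor and the hardest step still to be written; the paper's route is shorter and yields a cleaner recursion, precisely because its stepsize-indexed mixing-time definition lets the geometric mixing beat $\alpha_t$ pointwise in $t$.
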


\section{Main Result}

We now state our main result which claims a linear speed-up for both distributed TD(0) and distributed TD($\lambda$). Recall that we use $\theta_{t}(i)$ to denote the parameters of agent $i$ at time $t$, $\bar{\theta}_t = (\sum_i \theta_{t}(i))/N$ to denote the averaged parameters among all $N$ agents, and $\bar{\theta}_{t}(i) = \mathbb{E} \left[ \theta_{t}(i) \right]$ to denote the expectation of $\theta_{t}(i)$. We now have the following two theorems for TD(0) and TD($\lambda$) respectively. Notice that $\tilde{O}$ omits logarithm factors. 

\begin{theorem} \label{t: td with markov}
Suppose Assumptions \ref{a: input assumption} and \ref{a: uniform mixing} hold. Denote $t_0 = \max \{ \tau_{\rm mix}, \frac{8}{\omega \omega_{\rm I} (1-\gamma)} -1, t_{\rm th}\}$. With $\nu_{\rm central}$ in Lemma \ref{l: result of linear td with markov}, TD(0) with $\alpha_t =  2/(\omega (t+1) (1-\gamma))$ satisfies
\begin{align*}
    \mathbb{E} \left[ ||\bar{\theta}_{T+t_0} - \theta^* ||^2 \right] 
    \le \frac{1}{N} \nu_{\rm central} + \tilde{O} \left( \frac{1}{T^2} \right), \forall T \ge 0.
\end{align*}
\end{theorem}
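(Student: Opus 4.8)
The plan is to split the averaged iterate into a ``variance'' part and a ``bias'' part and to show that the bias contributes only the low‑order $\tilde O(1/T^2)$ term. Write $d_t := \bar\theta_t(i) - \theta^*$ for the single‑agent bias; under a common initialization all agents share the same law, so $d_t$ is independent of $i$ and $\mathbb{E}[\bar\theta_t] = \bar\theta_t(i)$, whence the cross term vanishes and
$$\mathbb{E}\big[\,\|\bar\theta_t - \theta^*\|^2\,\big] = \mathbb{E}\big[\,\|\bar\theta_t - \bar\theta_t(i)\|^2\,\big] + \|d_t\|^2 .$$
By independence of the $N$ agents, $\mathbb{E}\|\bar\theta_t - \bar\theta_t(i)\|^2 = \tfrac1N \mathbb{E}\|\theta_t(i) - \bar\theta_t(i)\|^2 \le \tfrac1N \mathbb{E}\|\theta_t(i) - \theta^*\|^2$, and for $t = T + t_0 \ge T + t_{\rm th}$ Lemma~\ref{l: result of linear td with markov} bounds the last quantity by $\nu_{\rm central}$. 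So the theorem reduces to the single‑agent bias estimate $\|d_{T+t_0}\|^2 = \tilde O(1/T^2)$.

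For the bias I would derive and unwind a recursion. Taking expectations in \eqref{eq: td 0 update} and using that $\bar g(\theta) = \Phi^T D R + \Phi^T D(\gamma P - I)\Phi\,\theta =: A\theta + b$ is affine with $\bar g(\theta^*) = 0$ (so $b = -A\theta^*$), we get $\mathbb{E}[\bar g(\theta_t(i))] = \bar g(\bar\theta_t(i)) = A d_t$, hence
$$d_{t+1} = (I + \alpha_t A)\, d_t + \alpha_t\, e_t, \qquad e_t := \mathbb{E}\big[\,g_{s_t(i),s_{t+1}(i)}(\theta_t(i))\,\big] - \bar g\big(\bar\theta_t(i)\big).$$
Under Markov sampling $e_t$ is nonzero because $\theta_t(i)$ is correlated with the transition $(s_t(i),s_{t+1}(i))$ and because the law of $s_t(i)$ is not $\mu$. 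I would control $e_t$ by the standard mixing device: with $\tau = \tau_{\rm mix}$, insert $\theta_{t-\tau}(i)$ and split $e_t$ into $\mathbb{E}[g(\theta_t)] - \mathbb{E}[g(\theta_{t-\tau})]$, $\mathbb{E}[g(\theta_{t-\tau})] - \mathbb{E}[\bar g(\theta_{t-\tau})]$, and $\mathbb{E}[\bar g(\theta_{t-\tau})] - \bar g(\bar\theta_t(i))$. The first and third pieces are bounded by the drift $\mathbb{E}\|\theta_t(i) - \theta_{t-\tau}(i)\| \le \sum_{k=t-\tau}^{t-1}\alpha_k\,\mathbb{E}\|g_{s_k,s_{k+1}}(\theta_k(i))\|$ and the Lipschitzness of $g$ (constant $\le 2$, since $\|\phi(s)\| \le 1$) and of $\bar g$; the middle piece is bounded, after conditioning on $s_{t-\tau}(i)$, by $m\rho^{\tau}\le \alpha_t$ (Assumption~\ref{a: uniform mixing}) times a magnitude factor. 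Using $\|g_{s,s'}(\theta)\| \le C_1 + C_2\|\theta - \theta^*\|$, a uniform‑in‑time moment bound $\sup_k \mathbb{E}\|\theta_k(i) - \theta^*\|^2 = \tilde O(1)$ (which I would prove first: Lemma~\ref{l: result of linear td with markov} gives it for $k \ge t_{\rm th}$, and the remaining $O(\log T)$ early steps are bounded crudely), and $\tau_{\rm mix} = O(\log(1/\alpha_t)) = O(\log t)$ with $\alpha_k = \Theta(1/k)$, this yields $\|e_t\| = \tilde O(1/t)$.

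It remains to solve the scalar recursion. The negative‑definiteness $d^\top A d \le -\omega(1-\gamma)\|d\|^2$ (the property behind the stepsize choice) gives, for $t\ge t_0$ — large enough that $\alpha_t^2\|A\|^2$ is dominated by $\alpha_t\omega(1-\gamma)$, which is exactly the role of the $\tfrac{8}{\omega\omega_{\rm I}(1-\gamma)}-1$ term in $t_0$ — the one‑step contraction $\|(I+\alpha_t A)d_t\| \le \big(1 - \tfrac1{t+1}\big)\|d_t\|$ with $\alpha_t = 2/(\omega(t+1)(1-\gamma))$. Hence $\|d_{t+1}\| \le \tfrac{t}{t+1}\|d_t\| + \alpha_t\|e_t\|$. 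Multiplying through by $t+1$, telescoping from $t_0$ (where $\|d_{t_0}\|$ is bounded by Lemma~\ref{l: result of linear td with markov}), and using $(t+1)\alpha_t = O(1)$ together with $\sum_{k}\tfrac{\log k}{k} = O((\log T)^2)$ gives $(t+1)\|d_{t+1}\| = O((\log T)^2)$, i.e. $\|d_{T+t_0}\| = \tilde O(1/T)$ and $\|d_{T+t_0}\|^2 = \tilde O(1/T^2)$. Combining with the variance bound proves the theorem.

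\textbf{Main obstacle.} I expect the bias estimate to be the hard part: obtaining a uniform‑in‑time moment bound for the \emph{unprojected} iterates, and then showing that the Markov‑induced perturbation $e_t$ in the mean recursion decays like $\tilde O(1/t)$ rather than remaining $\Theta(1)$. It is precisely this decay that keeps the bias term at the order $\tilde O(1/T^2)$ rather than $\Theta(1/T)$, and hence preserves the $1/N$ linear speedup.
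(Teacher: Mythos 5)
Your proposal is correct and shares the paper's overall architecture: you split the error of the averaged iterate into a variance part, which independence reduces to $\nu_{\rm central}/N$ via Lemma \ref{l: result of linear td with markov}, plus the squared bias of a single agent's expected iterate (your variance decomposition and the paper's expansion \eqref{eq: distributed analysis}, with cross terms turned into products of expectations, are the same estimate), and you then show the bias obeys an affine contraction of modulus $1-\Theta(1/t)$ perturbed by a Markov-noise term of size $\tilde{O}(\alpha_t)$ — exactly the paper's plan. The genuine difference is in how that perturbation is controlled: the paper bounds $\mathbb{E}\Vert\bar g'(\theta_t)-\bar g(\theta_t)\Vert$ in one stroke via $\Vert P_t(\cdot|s_0)-\mu\Vert_1\le\alpha_t$ in \eqref{eq: markov noise}, which is shorter but implicitly treats $\theta_t$ as decoupled from the current state $s_t$, whereas you condition at time $t-\tau_{\rm mix}$ and split into drift, mixing, and drift; this is the standard conditioning device, it handles the $\theta_t$--$s_t$ correlation explicitly (so it is arguably the more rigorous route), but it costs you a uniform moment bound for the unprojected iterates, Lipschitz/drift estimates, and extra logarithmic factors from $\tau_{\rm mix}=O(\log t)$. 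Both routes give a noise of order $\tilde{O}(1/t)$ and hence bias $\tilde{O}(1/T)$; the paper closes the recursion with the induction Lemma \ref{l: decaying random variable 2} while your explicit telescoping gives $O((\log T)^2/T)$, and either suffices for the stated $\tilde{O}(1/T^2)$ bias term. Two small points to tidy: your shift-back needs a constant multiple of $\tau_{\rm mix}$ of headroom beyond what $t_0=\max\{\tau_{\rm mix},\cdot,\cdot\}$ literally guarantees (harmless at the $\tilde{O}$ level, and handled in the paper's TD($\lambda$) theorem by using $2\tau^{(\lambda)}_{\rm mix}$), and your claim that the cross term vanishes assumes identically initialized agents — with arbitrary per-agent initialization one keeps the cross terms as products of per-agent biases, as in \eqref{eq: distributed analysis}, without changing the conclusion.
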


\begin{theorem}
\label{t: td lambda with markov}    
Suppose Assumptions \ref{a: input assumption} and \ref{a: uniform mixing} hold. Denote $t_0^{(\lambda)} = \max \{ 2 \tau^{(\lambda)}_{\rm mix}, \frac{8}{\omega \omega_{\rm I}^{(\lambda)} (1-\kappa)} -1, t_{\rm th}^{(\lambda)}\}$. With $\nu^{(\lambda)}_{\rm central}$ in Lemma \ref{l: result of td lambda with markov}, TD($\lambda$) with $\alpha_t = 2/(\omega (t+1) (1-\kappa))$ satisfies
\begin{align*}
    \mathbb{E} \left[ ||\bar{\theta}_{T+t_0} - \theta^* ||^2 \right] 
    \le \frac{1}{N} \nu^{(\lambda)}_{\rm central} + \tilde{O} \left( \frac{1}{T^2} \right), \forall T \ge 0.
\end{align*}

\end{theorem}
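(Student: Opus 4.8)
The plan is a bias--variance decomposition of the one--shot average: the variance across agents is controlled by the single--agent bound of Lemma~\ref{l: result of td lambda with markov} together with the factor $1/N$ coming from independence, while the bias $\mathbb{E}[\bar\theta_{T+t_0^{(\lambda)}}]-\theta^*$ is controlled by treating the mean of a single agent's iterate as a perturbed linear recursion. Write $\tau:=\tau^{(\lambda)}_{\rm mix}$, $\bar\theta_t(i):=\mathbb{E}[\theta_t(i)]$ and $y_t(i):=\bar\theta_t(i)-\theta^*$, so that $\mathbb{E}[\bar\theta_t]=\tfrac1N\sum_i\bar\theta_t(i)$. Since $\bar\theta_t-\mathbb{E}[\bar\theta_t]$ has mean zero, $\mathbb{E}\|\bar\theta_t-\theta^*\|^2=\mathbb{E}\|\bar\theta_t-\mathbb{E}[\bar\theta_t]\|^2+\|\mathbb{E}[\bar\theta_t]-\theta^*\|^2$; independence of the agents' trajectories gives $\mathbb{E}\|\bar\theta_t-\mathbb{E}[\bar\theta_t]\|^2=N^{-2}\sum_i\mathbb{E}\|\theta_t(i)-\bar\theta_t(i)\|^2\le N^{-1}\max_i\mathbb{E}\|\theta_t(i)-\theta^*\|^2$ (a deterministic vector is never a better centre of mass than the mean), which at $t=T+t_0^{(\lambda)}$ is $\le\tfrac1N\nu^{(\lambda)}_{\rm central}$ by Lemma~\ref{l: result of td lambda with markov} (invoked with horizon $T+t_0^{(\lambda)}-t^{(\lambda)}_{\rm th}\ge T$, using $t_0^{(\lambda)}\ge t^{(\lambda)}_{\rm th}$). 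Since $\|\mathbb{E}[\bar\theta_t]-\theta^*\|\le\max_i\|y_t(i)\|$, it suffices to prove $\|y_t(i)\|=\tilde O(1/T)$ at $t=T+t_0^{(\lambda)}$ for each fixed $i$.

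Taking expectations in \eqref{eq: td lambda update},
\[
y_{t+1}(i)=(I+\alpha_t\bar A)\,y_t(i)+\alpha_t e_t,\qquad e_t:=\mathbb{E}\big[x_{s_t(i),s_{t+1}(i)}(\theta_t(i),z_{0:t})\big]-\bar x(\bar\theta_t(i)),
\]
where $\bar A$ is the constant matrix with $\bar x(\theta)-\bar x(\theta^*)=\bar A(\theta-\theta^*)$ (it exists because $T^{(\lambda)}_\pi$ is affine). The classical TD($\lambda$) steady--state estimate (Lemma~8 of \cite{tsitsiklis1996analysis-nips}, the non-expansiveness $\|Pv\|_D\le\|v\|_D$, and $\lambda_{\min}(\Phi^\top D\Phi)=\omega$) yields $y^\top\bar A y\le-(1-\kappa)\omega\|y\|^2$ and $\|\bar A\|\le C_A$, so
\[
\|y_{t+1}(i)\|\le\big(1-(1-\kappa)\omega\alpha_t+\tfrac12 C_A^2\alpha_t^2\big)\,\|y_t(i)\|+\alpha_t\|e_t\|,
\]
and with $\alpha_t=2/(\omega(t+1)(1-\kappa))$ the contraction factor equals $1-\tfrac{2}{t+1}+O(1/t^2)$.

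The main obstacle is bounding $\|e_t\|$; this step has no analogue in the i.i.d.\ or the TD(0) analysis, because $x$ is bilinear in $(\theta_t,z_{0:t})$ and both the iterate and the eligibility trace feed back into the Markov trajectory. For $t\ge t_0^{(\lambda)}$ I peel $e_t$ apart as follows. Replacing $\theta_t(i)$ by $\theta_{t-2\tau}(i)$ inside $x$ costs at most $\|z_{0:t}\|\,\mathbb{E}\|\theta_t(i)-\theta_{t-2\tau}(i)\|=O(\tau\alpha_t)$, using $\|z_{0:t}\|\le(1-\gamma\lambda)^{-1}$, uniform boundedness of $\mathbb{E}\|x_s\|$ (a consequence of Lemma~\ref{l: result of td lambda with markov}), and comparability of $\alpha_s$ over the last $2\tau$ steps after the burn--in; truncating the trace to $\tilde z_t:=\sum_{k=0}^{\tau}(\gamma\lambda)^k\phi(s_{t-k}(i))$ costs $(\gamma\lambda)^{\tau+1}(1-\gamma\lambda)^{-1}\le\alpha_t(1-\gamma\lambda)^{-1}$ since $\tau\ge\tau'_{\rm mix}$; and replacing $\bar\theta_t(i)$ by $\bar\theta_{t-2\tau}(i)$ in $\bar x$ costs $O(\tau\alpha_t)$. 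What remains is $\mathbb{E}[\delta_{s_t,s_{t+1}}(\theta_{t-2\tau}(i))\,\tilde z_t]-\bar x(\bar\theta_{t-2\tau}(i))$. Writing $\delta_{s,s'}(\theta)=r(s,s')+(\gamma\phi(s')-\phi(s))^\top\theta$, the reward part is $O(\alpha_t)$ directly by Assumption~\ref{a: uniform mixing}, since a gap of $\tau$ steps separates $s_{t-2\tau}(i)$ from the earliest state entering $\tilde z_t$, $s_t$, $s_{t+1}$, making their joint law $\alpha_t$--close to the stationary one --- this gap plus the trace length $\tau$ is precisely why $t_0^{(\lambda)}$ contains $2\tau^{(\lambda)}_{\rm mix}$. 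For the $\theta$--linear part, set $M_t:=\tilde z_t(\gamma\phi(s_{t+1}(i))-\phi(s_t(i)))^\top$ (a bounded matrix) and split $\theta_{t-2\tau}(i)=\bar\theta_{t-2\tau}(i)+\big(\theta_{t-2\tau}(i)-\bar\theta_{t-2\tau}(i)\big)$: the $\bar\theta_{t-2\tau}(i)$ piece times $\mathbb{E}[M_t]$ matches the $\theta$--linear part of $\bar x(\bar\theta_{t-2\tau}(i))$ up to $O(\alpha_t)$ by the same mixing, while the fluctuation $\mathbb{E}\big[M_t\big(\theta_{t-2\tau}(i)-\bar\theta_{t-2\tau}(i)\big)\big]$ is handled by conditioning on $s_{t-2\tau}(i)$: the Markov property makes $M_t$ and $\theta_{t-2\tau}(i)$ conditionally independent given $s_{t-2\tau}(i)$, mixing yields $\mathbb{E}[M_t\mid s_{t-2\tau}(i)]=\bar M+O(\alpha_t)$ for a deterministic $\bar M$, and $\mathbb{E}[\theta_{t-2\tau}(i)-\bar\theta_{t-2\tau}(i)]=0$, so this term is $O(\alpha_t)\,\mathbb{E}\|\theta_{t-2\tau}(i)-\bar\theta_{t-2\tau}(i)\|=O\big(\alpha_t\sqrt{\nu^{(\lambda)}_{\rm central}}\big)$. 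Collecting everything, $\|e_t\|=O(\tau^{(\lambda)}_{\rm mix}\,\alpha_t)$, and since $\tau^{(\lambda)}_{\rm mix}=O(\log(1/\alpha_t))=O(\log T)$ we obtain $\alpha_t\|e_t\|=O((\log T)/t^2)$.

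Feeding $\alpha_t\|e_t\|=O((\log T)/t^2)$ and the factor $1-\tfrac{2}{t+1}+O(1/t^2)$ into the recursion and unrolling from $t_0^{(\lambda)}$ to $T+t_0^{(\lambda)}$ produces a homogeneous term $O\big((t_0^{(\lambda)}/T)^2\|y_{t_0^{(\lambda)}}(i)\|\big)=\tilde O(1/T^2)$ and an inhomogeneous term $\sum_s O((s/T)^2)\cdot O((\log T)/s^2)=O((\log T)/T)$, hence $\|y_{T+t_0^{(\lambda)}}(i)\|=O((\log T)/T)$ for every $i$, so $\|\mathbb{E}[\bar\theta_{T+t_0^{(\lambda)}}]-\theta^*\|^2=\tilde O(1/T^2)$; combined with the first paragraph this is the claim. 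Theorem~\ref{t: td with markov} follows from the same argument by specializing to $\lambda=0$: the eligibility trace collapses to $z_{0:t}=\phi(s_t(i))$, the trace--truncation step disappears, $2\tau^{(\lambda)}_{\rm mix}$ is replaced by $\tau_{\rm mix}$ and $\kappa$ by $\gamma$, and the same four steps go through.
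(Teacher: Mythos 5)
Your proposal is correct in substance and shares the paper's overall skeleton: an independence-based bias--variance split (your first paragraph is equivalent to the paper's inequality (\ref{eq: distributed analysis})), the per-agent mean-squared bound of Lemma \ref{l: result of td lambda with markov} supplying the $\frac1N\nu^{(\lambda)}_{\rm central}$ term, and a perturbed linear recursion for the bias $\mathbb{E}[\theta_t(i)]-\theta^*$ driven by the negative-definiteness of the mean TD($\lambda$) operator (Lemma \ref{l: smallest eigenvalue of i.i.d. matrix, lambda}) plus a Markov-noise term of size $\tilde O(\alpha_t)$. Where you genuinely diverge is in how that Markov-noise term is bounded. The paper works directly with $\bar{x}'(\theta_t)-\bar{x}(\theta_t)$, splitting the eligibility-trace sum at $\tau^{(\lambda)}_{\rm mix}$ and bounding everything through the uniform moment bound $u^{(\lambda)}=\max_{i,t}\mathbb{E}\Vert\theta_t(i)-\theta^*\Vert$, which yields an $O(\alpha_t)$ perturbation and, via Lemma \ref{l: decaying random variable 2}, a bias of order $1/t$ with no logarithmic factor. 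You instead use a decoupling argument: back the iterate up by $2\tau^{(\lambda)}_{\rm mix}$ steps, truncate the trace at length $\tau^{(\lambda)}_{\rm mix}$, and exploit the conditional independence of the window $(s_{t-\tau},\ldots,s_{t+1})$ and $\theta_{t-2\tau}$ given $s_{t-2\tau}$, with the mean-zero split $\theta_{t-2\tau}=\bar\theta_{t-2\tau}+(\theta_{t-2\tau}-\bar\theta_{t-2\tau})$ killing the leading cross term --- this last step is the nontrivial point and you handle it correctly. Your route pays an extra factor $\tau^{(\lambda)}_{\rm mix}=O(\log T)$ in $\Vert e_t\Vert$ (so bias $O(\log T/T)$ rather than $O(1/T)$), which is harmless since the claim is only $\tilde O(1/T^2)$; in exchange it treats the correlation between $\theta_t$, the eligibility trace, and the current transition fully explicitly, whereas the paper's definition of $\bar x'(\theta_t)$ absorbs that coupling implicitly into $\mathbb{E}[l_\theta]$. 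Two small points to tidy up if you write this in full: the step-size comparability $\alpha_{t-2\tau^{(\lambda)}_{\rm mix}}\le C\alpha_t$ needs $t$ a constant multiple of $\tau^{(\lambda)}_{\rm mix}$ beyond the burn-in (for $t$ between $2\tau^{(\lambda)}_{\rm mix}$ and, say, $4\tau^{(\lambda)}_{\rm mix}$ the affected terms must be folded into the constant absorbed by the $(t_0^{(\lambda)}/T)^2$ factor), and the uniform boundedness of $\mathbb{E}\Vert x_{s_t,s_{t+1}}(\theta_t,z_{0:t})\Vert$ before the threshold time should be justified by the boundedness of finitely many affine updates rather than by Lemma \ref{l: result of td lambda with markov} alone --- the same caveat the paper's own use of $u^{(\lambda)}$ carries.
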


In brief, the distributed version with $N$ nodes is $N$ times faster than the comparable centralized version for large enough $T$ (note that $v_{\rm central}$ and $v_{\rm central}^{(\lambda)}$ are $\tilde{O}(1/T)$ whereas the term that does not get divided by $N$ is $\tilde{O}(1/T^2)$ in both theorems). This significantly improves previous results from \cite{liu2023distributed}, which only showed this for TD(0). 

We note that the proofs given in this paper have no  overlaps with the proof from \cite{liu2023distributed}, which could not be extended to either TD($\lambda$) or Markov sampling (and here both extensions are done simultaneously). Instead, our analysis here is based on the following simple observation, at each step of which we just use independence plus elementary algebra:  
\begin{equation}
\label{eq: distributed analysis}
\begin{aligned}
    \mathbb{E} \left[ ||\bar{\theta}_T - \theta^* ||^2 \right] 
    \le & \frac{1}{N^2} \sum_{i=1}^N  \mathbb{E} \left[ \left\Vert \theta_{T}(i) - \theta^* \right\Vert^2 \right] + \frac{2}{N^2} \sum_{1 \le i < j \le N} \mathbb{E} \left[ \left(\theta_{T}(i) - \theta^* \right)^T \left(\theta_{T}(j) - \theta^* \right) \right] \\
    = & \frac{1}{N^2} \sum_{i=1}^N  \mathbb{E} \left[ \left\Vert \theta_{T}(i) - \theta^* \right\Vert^2 \right] + \frac{2}{N^2} \sum_{1 \le i < j \le N} \left( \bar{\theta}_{T}(i) - \theta^* \right)^T \left( \bar{\theta}_{T}(j) - \theta^* \right).
\end{aligned}
\end{equation}
Here, the last equality uses the fact that $\theta_{T}(i)$ are independent of each other since there is no communication during learning. This immediately implies a linear speed-up  if only we could prove that the first term dominates.  This is quite plausible, since the second term involves the convergence speed of the {\em expected} updates. 
In other words, all that is really needed is to prove that the expected update converges faster than the unexpected update. 

\section{Proof of Theorem \ref{t: td with markov}}

Before we go into the proof, we first introduce some notations. We rewrite $\bar{g}(\theta)$ in matrix notation as
\begin{equation}
\label{eq: g bar}
    \begin{aligned}
    \bar{g}(\theta) 
    = & \Phi^T D (I - \gamma P) \Phi \left( \theta^* - \theta \right)
    := \Sigma_{\rm I} \left( \theta^* - \theta \right). 
    \end{aligned}
\end{equation}
This matrix has some nice properties, which is pointed out in \cite{liu2021temporal,tian2022performance}. Indeed, we have the following lemmas whose proof we postpone: 

\begin{lemma}
\label{l: smallest eigenvalue of i.i.d. matrix}
There exists $\omega_{\rm I}>0$ such that 
$$
\inf_{\Vert x \Vert = 1} x^T \Sigma_{\rm I} x \ge \omega_{\rm I} \ge (1-\gamma) \omega.
$$
\end{lemma}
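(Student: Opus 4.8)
The plan is to pass from the quadratic form on $\R^d$ to the weighted geometry on $\R^{|S|}$ induced by the stationary distribution, and then invoke the classical fact that the transition matrix $P$ is a non-expansion in the $\mu$-weighted norm. Write $\|v\|_D^2 := v^T D v$ and $\langle u, v\rangle_D := u^T D v$; since $\mu(s) > 0$ for all $s$ by Assumption~\ref{a: input assumption}, we have $D \succ 0$, so this is a genuine inner product. For a unit vector $x$, set $y = \Phi x \in \R^{|S|}$. Then, using \eqref{eq: g bar} and the definition $\Sigma_{\rm I} = \Phi^T D (I - \gamma P)\Phi$,
\[
x^T \Sigma_{\rm I} x = y^T D (I - \gamma P) y = \|y\|_D^2 - \gamma\, \langle y, Py\rangle_D .
\]

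First I would establish the non-expansiveness estimate $\|Pv\|_D \le \|v\|_D$ for every $v \in \R^{|S|}$. This follows from Jensen's inequality applied row by row: $\|Pv\|_D^2 = \sum_s \mu(s)\big(\sum_{s'} P(s'|s) v(s')\big)^2 \le \sum_s \mu(s)\sum_{s'} P(s'|s) v(s')^2 = \sum_{s'} v(s')^2 \sum_s \mu(s) P(s'|s) = \sum_{s'} \mu(s') v(s')^2 = \|v\|_D^2$, where the penultimate equality uses $\mu^T P = \mu^T$. Combined with the Cauchy--Schwarz inequality in $\langle \cdot, \cdot\rangle_D$, this yields $|\langle y, Py\rangle_D| \le \|y\|_D \|Py\|_D \le \|y\|_D^2$, hence
\[
x^T \Sigma_{\rm I} x \ \ge\ (1-\gamma)\|y\|_D^2 \ =\ (1-\gamma)\, x^T \Phi^T D \Phi\, x \ \ge\ (1-\gamma)\,\omega ,
\]
the last step using $\|x\| = 1$ together with $\omega = \lambda_{\min}(\Phi^T D\Phi)$, which is strictly positive because $\Phi$ has linearly independent columns and $D \succ 0$.

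It then remains only to package this as the statement: $x^T \Sigma_{\rm I} x = x^T \tfrac12(\Sigma_{\rm I} + \Sigma_{\rm I}^T) x$ is continuous on the compact unit sphere, so $\omega_{\rm I} := \inf_{\|x\| = 1} x^T \Sigma_{\rm I} x$ is attained and equals the smallest eigenvalue of the symmetric part of $\Sigma_{\rm I}$, and the display above gives $\omega_{\rm I} \ge (1-\gamma)\omega > 0$. I do not expect a real obstacle here: the only non-routine ingredient is the $D$-norm non-expansiveness of $P$ (the classical contraction lemma going back to Tsitsiklis and Van Roy), and the rest is Cauchy--Schwarz plus the fixed-point identity $\mu^T P = \mu^T$. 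The one point to double-check is that $\omega$ in the statement is indeed the constant $\lambda_{\min}(\Phi^T D\Phi)$ used in the step-size choices of Lemmas~\ref{l: result of linear td with markov} and~\ref{l: result of td lambda with markov}; with any other normalization of $\omega$ only the multiplicative constant in the bound would change.
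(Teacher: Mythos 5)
Your proof is correct, and it takes a mildly different route from the paper's. The paper quotes (from the cited works of Liu--Olshevsky and Tian et al.) the exact algebraic identity expressing $x^T\Sigma_{\rm I}x$ as $(1-\gamma)$ times the $D$-weighted norm of $y=\Phi x$ plus a nonnegative multiple of the Dirichlet semi-norm $\sum_{s,s'}\mu(s)P(s'|s)\,(y(s')-y(s))^2$, and then simply discards the Dirichlet term to get $x^T\Sigma_{\rm I}x \ge (1-\gamma)\,x^T\Phi^T D\Phi\, x \ge (1-\gamma)\omega$. You instead bound the cross term $\gamma\langle y,Py\rangle_D$ directly, using Jensen's inequality row by row together with $\mu^T P=\mu^T$ to show $\Vert Pv\Vert_D\le\Vert v\Vert_D$, and then Cauchy--Schwarz in the $D$-inner product; both arguments hinge on exactly the same ingredient (stationarity of $\mu$) and land on the identical bound $(1-\gamma)\Vert y\Vert_D^2\ge(1-\gamma)\omega\Vert x\Vert^2$. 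What your version buys is self-containedness: it does not rely on importing the decomposition from prior work, only on the classical non-expansiveness of $P$ in the $\mu$-weighted norm. What the paper's version buys is an exact identity rather than an inequality, so in principle it retains the Dirichlet term for a sharper constant, though neither proof exploits that slack. Your closing caveat is well taken: the paper never formally defines $\omega$, but its own proof uses precisely $\omega=\lambda_{\min}(\Phi^T D\Phi)$ (positive by Assumption~\ref{a: input assumption}), so your normalization matches.
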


\begin{lemma}
\label{l: singular value of i.i.d. matrix}
For any $x$, $\Vert \Sigma_{\rm I} x \Vert^2 \le 4 \Vert x \Vert^2$.
\end{lemma}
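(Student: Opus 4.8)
The plan is to prove the slightly stronger statement that the spectral norm of $\Sigma_{\rm I} = \Phi^T D (I - \gamma P) \Phi$ is at most $1 + \gamma$, which is $\le 2$ since $\gamma \in (0,1)$; squaring then gives $\Vert \Sigma_{\rm I} x \Vert^2 \le 4 \Vert x \Vert^2$ for all $x$. The proof factors $\Sigma_{\rm I}$ into the three blocks $\Phi^T D$, $(I - \gamma P)$, and $\Phi$, and controls each in the $\mu$-weighted (semi)norm $\Vert v \Vert_D^2 := v^T D v = \sum_{s} \mu(s) v(s)^2$, so that $\Vert D^{1/2} v \Vert = \Vert v \Vert_D$.

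First I would record the standard fact that $P$ is nonexpansive in the $D$-norm. For any vector $v$, Jensen's inequality applied to $t \mapsto t^2$ with the probability weights $P(s' \mid s)$ gives $(Pv)(s)^2 \le \sum_{s'} P(s' \mid s) v(s')^2$; multiplying by $\mu(s)$, summing over $s$, and using stationarity $\mu^T P = \mu^T$ yields $\Vert Pv \Vert_D \le \Vert v \Vert_D$. Hence, by the triangle inequality, $\Vert (I - \gamma P) v \Vert_D \le \Vert v \Vert_D + \gamma \Vert P v \Vert_D \le (1+\gamma) \Vert v \Vert_D$.

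Next I would handle the two copies of $\Phi$ using the normalization $\Vert \phi(s) \Vert \le 1$. For any $y$, $y^T \Phi^T D \Phi y = \sum_s \mu(s) (\phi(s)^T y)^2 \le \sum_s \mu(s) \Vert y \Vert^2 = \Vert y \Vert^2$, which is exactly $\Vert D^{1/2} \Phi \Vert_{\mathrm{op}} \le 1$. On the input side this gives $\Vert \Phi x \Vert_D = \Vert D^{1/2} \Phi x \Vert \le \Vert x \Vert$; on the output side, writing $\Phi^T D w = (D^{1/2}\Phi)^T (D^{1/2} w)$, it gives $\Vert \Phi^T D w \Vert \le \Vert D^{1/2} \Phi \Vert_{\mathrm{op}} \, \Vert D^{1/2} w \Vert \le \Vert w \Vert_D$. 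Chaining the three bounds with $w = (I - \gamma P)\Phi x$ and $v = \Phi x$ produces $\Vert \Sigma_{\rm I} x \Vert \le \Vert (I - \gamma P)\Phi x \Vert_D \le (1+\gamma)\Vert \Phi x \Vert_D \le (1+\gamma)\Vert x \Vert \le 2 \Vert x \Vert$, and squaring finishes the proof.

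I do not anticipate a genuine obstacle; this is a short chain of elementary estimates. The only two steps that merit care are the Jensen argument giving the $D$-norm contraction of $P$ (which is where stochasticity of $P$ and stationarity of $\mu$ enter) and the identity $\Vert D^{1/2}\Phi\Vert_{\mathrm{op}} \le 1$ that converts the per-feature bound $\Vert \phi(s)\Vert \le 1$ into an operator-norm bound used on both the left and the right of $(I-\gamma P)$.
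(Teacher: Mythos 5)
Your proof is correct, but it takes a genuinely different route from the paper's. The paper works directly with the sampled form of the update: it writes $\Sigma_{\rm I} x = \mathbb{E}_{\rm I}\left[ \left(\phi(s)^T x - \gamma \phi(s')^T x\right)\phi(s) \right]$, applies Jensen (norm of an expectation bounded by the expectation of the norm), and then uses $\Vert \phi(s)\Vert \le 1$ and $\gamma \le 1$ pointwise to get the bound in two lines. You instead argue at the matrix level, factoring $\Sigma_{\rm I} = \Phi^T D (I-\gamma P)\Phi$ and passing through the $\mu$-weighted norm: the nonexpansiveness of $P$ in the $D$-norm (Jensen plus stationarity $\mu^T P = \mu^T$), the triangle inequality for $I - \gamma P$, and the operator-norm bound $\Vert D^{1/2}\Phi\Vert \le 1$ used on both sides. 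Each step checks out, and the chaining $\Vert \Sigma_{\rm I} x \Vert \le (1+\gamma)\Vert x \Vert \le 2\Vert x\Vert$ gives the claim (indeed the slightly sharper constant $(1+\gamma)^2$, which the paper's pointwise estimate would also yield but does not state). The paper's argument is shorter and transfers verbatim to the TD($\lambda$) matrix $\Sigma^{(\lambda)}_{\rm I}$ via the analogous expectation representation; your argument is more structural, making explicit the two reusable facts (contraction of $P$ in the $D$-norm and $\Vert D^{1/2}\Phi\Vert_{\mathrm{op}} \le 1$) that also underlie the paper's bounds elsewhere, e.g.\ in the proof that $\Vert\theta^*\Vert \le r_{\rm max}/\omega_{\rm I}$.
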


In the rest of this section, we will omit $i$ since the analysis holds for all $i \in \{1, 2, \ldots, n\}$. For example, we will use $\theta_t$ instead of $\theta_t(i)$. 

First, we define 
$$
\bar{g}'(\theta_t) = \mathbb{E}_{\rm M} \left[ g_{s,s'}(\theta_t) \right]
$$
where $\mathbb{E}_{\rm M}$ is defined in Section \ref{sec: markov sampling}. We call $\bar{g}'(\theta_t) - \bar{g}(\theta_t)$ Markov noise and write it as
\begin{equation}
\label{eq: markov noise}
\begin{aligned}
    \bar{g}'(\theta_t) - \bar{g}(\theta_t) 
    = & \sum_{s_t, s_t'} \left( P_{t}(s_t|s_0) - \mu(s_t) \right) \phi(s_t) \cdot \left( r_t - \left( \phi(s_t) - \gamma P(s_t' | s_t) \phi(s_t') \right) \theta_t \right) \\
    = & \sum_{s_t, s_t'} \left( P_{t}(s_t|s_0) - \mu(s_t) \right) P(s_t' | s_t) g_{s_t, s_t'}(\theta_t),
\end{aligned}
\end{equation}
where $P_{t}(s'|s)$ stands for the $t$ step transition probability. To further address both the Markov noise and the recursion relations we will derive, we  need the following lemmas, whose proofs we also postpone.

\begin{lemma} \label{l: bounded optimal point}
In TD(0), $\theta^*$ satisfies $||\theta^*|| \le R$ with $R := r_{\rm max}/\omega_{\rm I}$.
\end{lemma}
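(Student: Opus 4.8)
The plan is to bound $\|\theta^*\|$ directly from the fixed-point equation $\bar g(\theta^*)=0$. Using the matrix form \eqref{eq: g bar}, the stationary point satisfies $\Sigma_{\rm I}(\theta^*-0) = \bar g(0) = \Phi^T D R$, i.e.\ $\Sigma_{\rm I}\theta^* = \Phi^T D R$. Hence $\|\Sigma_{\rm I}\theta^*\| = \|\Phi^T D R\|$, and the goal is to lower-bound the left side in terms of $\|\theta^*\|$ and upper-bound the right side by $r_{\rm max}$.

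First I would apply Lemma \ref{l: smallest eigenvalue of i.i.d. matrix}. Although $\Sigma_{\rm I}$ need not be symmetric, the lemma gives $x^T\Sigma_{\rm I} x \ge \omega_{\rm I}\|x\|^2$ for all $x$, and by Cauchy--Schwarz $\|\Sigma_{\rm I} x\|\,\|x\| \ge x^T\Sigma_{\rm I} x \ge \omega_{\rm I}\|x\|^2$, so $\|\Sigma_{\rm I} x\| \ge \omega_{\rm I}\|x\|$ for every $x$; in particular $\|\Sigma_{\rm I}\theta^*\|\ge \omega_{\rm I}\|\theta^*\|$.

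Next I would bound $\|\Phi^T D R\|$. Write $\Phi^T D R = \sum_{s\in S}\mu(s)\,r(s)\,\phi(s)$. Using $\|\phi(s)\|\le 1$ (the feature normalization assumed in the paper), $|r(s)| = |\sum_{s'}P(s'|s)r(s,s')|\le r_{\rm max}$ by Assumption \ref{a: reward bound}, and $\sum_s \mu(s)=1$ with $\mu(s)\ge 0$, the triangle inequality gives $\|\Phi^T D R\| \le \sum_s \mu(s)|r(s)|\,\|\phi(s)\| \le r_{\rm max}$. Combining, $\omega_{\rm I}\|\theta^*\| \le \|\Sigma_{\rm I}\theta^*\| = \|\Phi^T D R\| \le r_{\rm max}$, so $\|\theta^*\|\le r_{\rm max}/\omega_{\rm I} = R$, as claimed.

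There is no serious obstacle here; the only point requiring a little care is the passage from the quadratic-form bound in Lemma \ref{l: smallest eigenvalue of i.i.d. matrix} to an operator-norm lower bound on $\Sigma_{\rm I}$, since $\Sigma_{\rm I}$ is not symmetric — but the Cauchy--Schwarz step above handles this cleanly without needing any spectral decomposition. (One could alternatively symmetrize, noting $x^T\Sigma_{\rm I}x = x^T\tfrac{\Sigma_{\rm I}+\Sigma_{\rm I}^T}{2}x$, and bound via the symmetric part; this is equivalent.) The remaining estimates are just the triangle inequality together with the stated normalizations on $\phi$ and $r$.
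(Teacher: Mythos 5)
Your proof is correct and follows essentially the same route as the paper: both reduce to the linear relation $\Sigma_{\rm I}\theta^* = \Phi^T D R$, lower-bound the action of $\Sigma_{\rm I}$ by $\omega_{\rm I}$ via Lemma \ref{l: smallest eigenvalue of i.i.d. matrix}, and upper-bound $\|\Phi^T D R\|$ by $r_{\rm max}$ using $\|\phi(s)\|\le 1$ and Assumption \ref{a: reward bound}. The only difference is presentational: the paper explicitly inverts $\Sigma_{\rm I}$ (Gershgorin plus a cited norm bound on the inverse), while your Cauchy--Schwarz step $\|\Sigma_{\rm I}x\|\ge \omega_{\rm I}\|x\|$ proves the same bound in a self-contained way, which is a perfectly valid (and slightly cleaner) substitute.
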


\begin{lemma}\label{l: decaying random variable 2}
For a sequence of numbers $\{x_{t}\}$ and three constants $a,b,c$ such that $a>1$, we have the following recursive inequality:
\begin{align*}
    x_{t+1} \le \left( 1 - \frac{a}{c+t} \right) x_t + \frac{b^2}{(c+t)^2}, \quad \forall t \ge \tau.
\end{align*}
Then we have the following result:
$$
 x_t \le \frac{\nu}{c + t} \text{, where } \nu = \max \left\{ \frac{b^2}{a-1}, (c+\tau) x_{\tau} \right\}.
$$
\end{lemma}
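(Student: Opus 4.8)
The plan is a straightforward induction on $t \ge \tau$, which is the standard device for turning a contraction-plus-decaying-noise recursion into an $O(1/t)$ rate; none of the earlier lemmas are needed here.

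\textbf{Base case.} At $t=\tau$ the claim $x_\tau \le \nu/(c+\tau)$ is immediate from the definition $\nu = \max\{b^2/(a-1),\,(c+\tau)x_\tau\} \ge (c+\tau)x_\tau$.

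\textbf{Inductive step.} Suppose $x_t \le \nu/(c+t)$ for some $t \ge \tau$. Assuming the multiplier $1 - a/(c+t)$ is nonnegative (true in the regime where the lemma is applied, where $\tau$ is large relative to the constant $a$; alternatively it suffices that $x_t \ge 0$, which holds in every application here since the $x_t$ are expected squared distances), substitute the inductive hypothesis into the recursion:
\[
x_{t+1} \;\le\; \Bigl(1 - \frac{a}{c+t}\Bigr)\frac{\nu}{c+t} + \frac{b^2}{(c+t)^2}
\;=\; \frac{\nu}{c+t} + \frac{b^2 - a\nu}{(c+t)^2}.
\]
Since $\nu \ge b^2/(a-1)$ we have $b^2 - a\nu \le -\nu$, hence
\[
x_{t+1} \;\le\; \frac{\nu}{c+t} - \frac{\nu}{(c+t)^2} \;=\; \frac{\nu\,(c+t-1)}{(c+t)^2}.
\]
The elementary bound $(c+t-1)(c+t+1) = (c+t)^2 - 1 \le (c+t)^2$ gives $\frac{c+t-1}{(c+t)^2} \le \frac{1}{c+t+1}$, so $x_{t+1} \le \nu/(c+t+1)$, closing the induction.

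\textbf{Main obstacle.} There is none of substance — this is a textbook stochastic-approximation recursion bound. The only points requiring care are: (i) the sign of the coefficient $1 - a/(c+t)$ (equivalently, nonnegativity of $x_t$) so the inductive bound propagates in the correct direction; and (ii) using the two terms defining $\nu$ in exactly the right places — the $(c+\tau)x_\tau$ term anchors the base case, while the $b^2/(a-1)$ term is precisely what makes the leftover slack $-\nu/(c+t)^2$ in the step large enough to absorb the loss incurred in passing from $1/(c+t)$ to $1/(c+t+1)$ scaling.
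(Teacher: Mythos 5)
Your proof is correct and follows essentially the same route as the paper: induction on $t$, using $\nu \ge b^2/(a-1)$ to absorb the noise term and the elementary bound $(c+t-1)(c+t+1) \le (c+t)^2$ to pass from $1/(c+t)$ to $1/(c+t+1)$. Your explicit remark on the nonnegativity of the coefficient $1 - a/(c+t)$ (or of $x_t$) is a point the paper leaves implicit, but otherwise the arguments coincide.
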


Given all preliminaries, we can begin our proof. 

\begin{proof}[Proof of Theorem \ref{t: td with markov}]

For simplicity, denote $\Delta_t = \bar{\theta}_t - \theta^*$. Letting $t = T+ t_0$, we take expectation on both sides in (\ref{eq: td 0 update}), 
$$
\Delta_{t+1} = \Delta_t + \alpha_t \mathbb{E} \left[ \bar{g}(\theta_t) \right] + \alpha_t \mathbb{E} \left[ \bar{g}'(\theta_t) - \bar{g}(\theta_t) \right].
$$
By (\ref{eq: g bar}), we know that $\mathbb{E} \left[ \bar{g}(\theta_t) \right] = - \Sigma_{\rm I} \Delta_t$. Notice that, Assumption \ref{a: input assumption} immediately implies $\Vert \Sigma_{\rm I} \Vert \le 1$. Therefore, 
\begin{align*}
    & \Vert (I - \alpha_t \Sigma_{\rm I}) \Delta_t \Vert^2 \\ 
    = & \Vert \Delta_t \Vert^2 - \alpha_t \Delta_t^T (\Sigma_{\rm I}^T + \Sigma_{\rm I}) \Delta_t + \alpha_t^2 \Delta_t^T \Sigma_{\rm I}^T \Sigma_{\rm I} \Delta_t \\
    \le & (1 - 2 \omega_{\rm I} \alpha_t + 4 \alpha_t^2) \Vert \Delta_t \Vert^2 \\
    \le & (1-\omega_{\rm I} \alpha_t) \Vert \Delta_t \Vert^2.
\end{align*}
Here we use Lemma \ref{l: smallest eigenvalue of i.i.d. matrix}, \ref{l: singular value of i.i.d. matrix} and the fact that $4\alpha_t \le \omega_{\rm I}$ (recall we assume $t \ge t_0$). This immediately implies
$$
\Vert (I - \alpha_t \Sigma_{\rm I}) \Delta_t \Vert \le (1-\omega_{\rm I} \alpha_t/2) \Vert \Delta_t \Vert,
$$
where we use the fact $\sqrt{1-x} \le 1-x/2$. Therefore,
$$
\Vert \Delta_{t+1} \Vert \le (1-\omega_{\rm I} \alpha_t/2) \Vert \Delta_t \Vert + \alpha_t \mathbb{E} \left[ \left\Vert \bar{g}'(\theta_t) - \bar{g}(\theta_t) \right\Vert \right].
$$
To address the second term on the right-hand side, by Assumption \ref{a: uniform mixing}, for all $t \ge t_0$, 
\begin{align*}
    \mathbb{E} \left[ \Vert \bar{g}'(\theta_t) - \bar{g}(\theta_t) \Vert \right] 
    \le & \mathbb{E} \bigg[ \left( r_{\rm max} + 2 \Vert \theta_t - \theta^* \Vert + 2 \Vert \theta^* \Vert \right) \cdot \Vert P_t( \cdot|s_0 ) - \mu \Vert_1 \bigg]\\
    \le & \alpha_t \left( r_{\rm max} + 2 u + 2 R \right).
\end{align*}
where $u$ is defined as $u = \max_{i,t} \mathbb{E} \Vert \theta_t(i) - \theta^* \Vert$. Notice that Lemma \ref{l: result of linear td with markov} guarantees $u$ is finite. This immediately indicates 
\begin{align*}
    ||\Delta_{t+1}||
    \le & \left( 1- \omega_{\rm I} \alpha_t/2 \right) ||\Delta_t|| + \alpha_t^2 \left( r_{\rm max} + 2u + 2 R \right).
\end{align*}
We set $x_t = \Vert \Delta_t \Vert, a = \omega_{\rm I}/(\omega(1-\gamma)), b^2 = r_{\rm max} + 2u + 2 R, c=1$ and $\tau = t_0$. By Lemma \ref{l: decaying random variable 2},
$$
||\Delta_t|| \le \frac{\nu}{1 + t},\quad \nu = \max \{ \alpha, \beta \}
$$
where
\begin{align*}
    \alpha = \frac{r_{\rm max} + 2u + 2 R}{\frac{\omega_{\rm I}}{\omega(1-\gamma)}-1}, \quad \beta = & (1+t_0) \Vert \Delta_{t_0} \Vert.
\end{align*}
Since all the above facts holds for every agent $i$, the result directly follows after plugging the above fact as well as Lemma \ref{l: result of linear td with markov} into (\ref{eq: distributed analysis}).

\end{proof}

\subsection{Proof of Lemma \ref{l: smallest eigenvalue of i.i.d. matrix}}

\begin{proof}
Based on \cite{liu2021temporal,tian2022performance}, one can show that
\begin{align*}
x^T \Sigma_I x = & (1-\gamma) \sum_{s \in S} \mu(s)y(s)^2 + \gamma \sum_{s,s' \in S} \mu(s) P(s'|s) (y(s') - y(s))^2 \\
\ge & (1-\gamma) \omega \sum_{s \in S} x(s)^2.    
\end{align*}
Here, $x \in \mathbb{R}^d$ is an arbitrary vector and $y = \Phi x \in \mathbb{R}^{|S|}$, whereas $x(s),y(s)$ is the entry of $x,y$ corresponding to the state $s$. Then, it is obvious that $\omega_{\rm I} \ge (1-\gamma) \omega$.
\end{proof}

\subsection{Proof of Lemma \ref{l: singular value of i.i.d. matrix}}

\begin{proof}
According to the definition of $\mathbb{E}_{\rm I}$ before (\ref{eq: stationary point for td 0}), $\Sigma_{\rm I} x = \mathbb{E}_{\rm I} \left[ (\phi(s)^T x - \gamma \phi(s')^T x) \phi(s) \right]$. Therefore, 
\begin{align*}
    \Vert \Sigma_{\rm I} x \Vert^2 
    \le \mathbb{E}_{\rm I} \left[ \left\Vert (\phi(s)^T x - \gamma \phi(s')^T x) \phi(s) \right\Vert^2 \right] \le 4 \Vert x \Vert^2
\end{align*}
where we use $\Vert \phi(s) \Vert \le 1$ and $\gamma \le 1$.
\end{proof}

\subsection{Proof of Lemma \ref{l: bounded optimal point}}

\begin{proof}
By the Gershgorin circle theorem, $D (\gamma P - I)$ is invertible, and thus so is $\Phi^T D (\gamma P - I) \Phi$. By (\ref{eq: stationary point for td 0}),
$$
\theta^* = \left[ \Phi^T D (I - \gamma P) \Phi \right]^{-1} \Phi^T D R.
$$
By Lemma 5.9 in \cite{olshevsky2023small} and Lemma \ref{l: smallest eigenvalue of i.i.d. matrix}, $\Vert \Sigma_{\rm I} \Vert^{-1} \le \omega_{\rm I}^{-1}$. Furthermore, since $\Vert\Phi^T \sqrt{D}\Vert^2 \le 1$ which is because all features vectors have norm at most one by assumption, and $\Vert \sqrt{D} R \Vert^2 \le r_{\rm max}^2$, we obtain $||\theta^*|| \le r_{\rm max}/\omega_{\rm I}$.

\end{proof}

\subsection{Proof of Lemma \ref{l: decaying random variable 2} }

\begin{proof}
We prove it by induction. First, it is easy to see $ x_\tau \le \frac{\nu}{c+\tau}$. Now suppose $ x_t \le \frac{\nu}{c+t}$, 
\begin{align*}
    x_{t+1} 
    \le & \left( 1 - a \cdot \alpha_t \right) x_t + \frac{b^2}{(c+t)^2} \\
    \le & \left( 1- \frac{a }{c+t} \right) \frac{\nu}{c+t} + \frac{b^2}{(c+t)^2} \\
    = & \frac{c+t-1}{(c+t)^2} \nu + \frac{(1-a)\nu + b^2}{(c+t)^2} \\
    \le & \frac{1}{c+t+1} \nu,
\end{align*}
where the last inequality uses the facts that $x^2 \ge (x-1)(x+1), \forall x$ and $(1-a)\nu+b^2 \le 0$ (This is because of the definition of $\nu$ as defined in Lemma \ref{l: decaying random variable 2}).     
\end{proof}

\section{Proof of Theorem \ref{t: td lambda with markov}}

Our proof here follows the same strategy as for TD(0). For simplicity, we define
$$
\Sigma^{(\lambda)}_{I} := \Phi^T D \Phi - (1-\lambda) \sum_{k=0}^\infty \lambda^k \gamma^{k+1} \Phi^T D P^{k+1} \Phi.
$$
One could use (\ref{eq: t lambda}) and (\ref{eq: stationary point for td lambda}) to show that 
\begin{equation}
    \label{eq: x bar}
    \bar{x}(\theta) = \Phi^T D \left( T_{\pi}^{(\lambda)}(\Phi \theta) - \Phi \theta \right) = \Sigma^{(\lambda)}_{I} (\theta - \theta^*).
\end{equation}
Inspired by \cite{liu2021temporal}, we claim the following lemma whose proof we postpone:

\begin{lemma}
\label{l: smallest eigenvalue of i.i.d. matrix, lambda}
These exists $\omega^{(\lambda)}_{\rm I} > 0$ such that
$$
\inf_{\Vert x \Vert = 1} x^T \Sigma^{(\lambda)}_{\rm I} x \ge \omega^{(\lambda)}_{\rm I} \ge (1-\kappa) \omega.
$$
\end{lemma}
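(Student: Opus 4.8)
The plan is to mirror the proof of Lemma~\ref{l: smallest eigenvalue of i.i.d. matrix}: rewrite the quadratic form $x^T \Sigma^{(\lambda)}_{\rm I} x$ as a nonnegative combination of weighted sums over states, then discard the nonnegative pieces. Set $y = \Phi x \in \mathbb{R}^{|S|}$ and $a_k = (1-\lambda)\lambda^k \gamma^{k+1}$, so that $x^T \Sigma^{(\lambda)}_{\rm I} x = y^T D y - \sum_{k=0}^\infty a_k\, y^T D P^{k+1} y$. Summing the geometric series gives $\sum_{k=0}^\infty a_k = (1-\lambda)\gamma \sum_{k=0}^\infty (\gamma\lambda)^k = \frac{(1-\lambda)\gamma}{1-\gamma\lambda} = \kappa$, which lets me regroup
\[
x^T \Sigma^{(\lambda)}_{\rm I} x \;=\; (1-\kappa)\, y^T D y \;+\; \sum_{k=0}^\infty a_k \left( y^T D y - y^T D P^{k+1} y \right).
\]

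The main step is to show each bracketed term is nonnegative. Since $\mu$ is stationary, $\sum_{s} \mu(s) P^{m}(s'|s) = \mu(s')$ for every $m \ge 1$, and expanding the square gives the identity
\[
y^T D y - y^T D P^{m} y \;=\; \tfrac12 \sum_{s,s' \in S} \mu(s) P^{m}(s'|s) \, \big( y(s') - y(s) \big)^2 \;\ge\; 0
\]
(this is the same computation as the $m=1$ case in Lemma~\ref{l: smallest eigenvalue of i.i.d. matrix}, and it is equivalent to the operator bound $\| P^{m} y \|_D \le \| y \|_D$ via Jensen plus Cauchy--Schwarz). Hence $x^T \Sigma^{(\lambda)}_{\rm I} x \ge (1-\kappa)\, y^T D y$. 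By Assumption~\ref{a: input assumption} the matrix $\Phi^T D \Phi$ is positive definite, so $y^T D y = x^T \Phi^T D \Phi x \ge \omega \|x\|^2$ with $\omega > 0$ its smallest eigenvalue. Since $\kappa \le \gamma < 1$, we conclude $x^T \Sigma^{(\lambda)}_{\rm I} x \ge (1-\kappa)\omega \|x\|^2$ for all $x$, and therefore $\omega^{(\lambda)}_{\rm I} := \inf_{\|x\|=1} x^T \Sigma^{(\lambda)}_{\rm I} x$ satisfies $\omega^{(\lambda)}_{\rm I} \ge (1-\kappa)\omega > 0$.

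I do not expect any genuine obstacle here; the one thing to check up front is that the series defining $\Sigma^{(\lambda)}_{\rm I}$ converges absolutely, which is immediate since every entry of $P^{k+1}$ lies in $[0,1]$ and $\|\phi(s)\| \le 1$, so the tail is dominated by $\sum_k \lambda^k \gamma^{k+1} < \infty$; this also justifies interchanging the infinite sum with the quadratic form and with the sum over states. The remaining work --- evaluating the geometric series to $\kappa$ and expanding $(y(s')-y(s))^2$ --- is routine, with the quadratic-form identity for $y^T D (I - P^{m}) y$ being the only part worth spelling out in detail.
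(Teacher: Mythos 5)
Your proof is correct, and it reaches the same decomposition that the paper relies on, but by a self-contained route: the paper disposes of the lemma in one line by citing Theorem 2 of \cite{liu2021temporal}, which asserts that $x^T \Sigma^{(\lambda)}_{\rm I} x$ is a convex combination of the $D$-norm (with weight $1-\kappa$) and a nonnegative Dirichlet-type semi-norm, whereas you derive exactly this structure from scratch. Your regrouping $x^T \Sigma^{(\lambda)}_{\rm I} x = (1-\kappa)\, y^T D y + \sum_{k\ge 0} a_k \bigl( y^T D y - y^T D P^{k+1} y \bigr)$ with $\sum_k a_k = \kappa$, together with the identity $y^T D y - y^T D P^{m} y = \tfrac12 \sum_{s,s'} \mu(s) P^{m}(s'|s) (y(s')-y(s))^2 \ge 0$ (valid because $\mu$ is stationary for $P^{m}$), is precisely the content hidden behind the citation, stated here with multi-step Dirichlet forms rather than a single aggregated semi-norm. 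The final step $y^T D y \ge \omega \|x\|^2$ from positive definiteness of $\Phi^T D \Phi$ under Assumption \ref{a: input assumption} matches the paper's conclusion $\omega^{(\lambda)}_{\rm I} \ge (1-\kappa)\omega$. What your version buys is independence from the external reference and an explicit justification (absolute convergence, stationarity under $P^{m}$) of steps the paper leaves implicit; what the paper's version buys is brevity and a direct link to the known $\lambda$-dependent norm decomposition. Incidentally, your factor $\tfrac12$ in the Dirichlet term is the correct one; the analogous display in the paper's proof of Lemma \ref{l: smallest eigenvalue of i.i.d. matrix} omits it, which is harmless for the lower bound but worth noting.
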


\begin{lemma}
\label{l: singular value of i.i.d. matrix, lambda}
For any $x$, $\Vert \Sigma_{\rm I}^{(\lambda)} x \Vert^2 \le 4 \Vert x \Vert^2$.
\end{lemma}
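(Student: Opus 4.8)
The plan is to bound the operator norm $\Vert\Sigma^{(\lambda)}_{\rm I}\Vert$ directly from its series definition and then invoke $\Vert\Sigma^{(\lambda)}_{\rm I}x\Vert\le\Vert\Sigma^{(\lambda)}_{\rm I}\Vert\,\Vert x\Vert$. Starting from $\Sigma^{(\lambda)}_{\rm I}=\Phi^T D\Phi-(1-\lambda)\sum_{k=0}^\infty\lambda^k\gamma^{k+1}\Phi^T D P^{k+1}\Phi$, the triangle inequality for the operator norm gives $\Vert\Sigma^{(\lambda)}_{\rm I}\Vert\le\Vert\Phi^T D\Phi\Vert+(1-\lambda)\sum_{k=0}^\infty\lambda^k\gamma^{k+1}\Vert\Phi^T D P^{k+1}\Phi\Vert$, so the whole thing reduces to a uniform bound on the matrices $\Phi^T D P^{m}\Phi$.

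The key sub-step is the claim $\Vert\Phi^T D P^m\Phi\Vert\le1$ for every $m\ge0$ (the case $m=0$ covering $\Phi^T D\Phi$). This holds because $\Phi^T D P^m\Phi=\sum_{s,s'}\mu(s)P^m(s'|s)\,\phi(s)\phi(s')^T$ is a convex combination of the rank-one matrices $\phi(s)\phi(s')^T$: the weights $\mu(s)P^m(s'|s)$ are nonnegative and sum to one since $\mu$ and each row of $P^m$ are probability vectors, and each $\phi(s)\phi(s')^T$ has operator norm $\Vert\phi(s)\Vert\,\Vert\phi(s')\Vert\le1$ under the standing normalization $\Vert\phi(s)\Vert\le1$; subadditivity and homogeneity of the operator norm then give the bound. (Equivalently, one can factor $\Phi^T D P^m\Phi=(\Phi^T\sqrt D)(\sqrt D P^m\Phi)$ and use $\Vert\sqrt D P^m\Phi x\Vert^2\le\Vert\sqrt D\Phi x\Vert^2$ by Jensen plus stationarity $\mu^T P^m=\mu^T$, together with $\Vert\Phi^T\sqrt D\Vert^2=\Vert\Phi^T D\Phi\Vert\le1$.)

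Plugging the sub-step in yields $\Vert\Sigma^{(\lambda)}_{\rm I}\Vert\le1+(1-\lambda)\sum_{k=0}^\infty\lambda^k\gamma^{k+1}$, and summing the geometric series gives $(1-\lambda)\sum_{k=0}^\infty\lambda^k\gamma^{k+1}=\gamma(1-\lambda)/(1-\gamma\lambda)=\kappa$. Since $\kappa\le\gamma<1$ (the inequality $\kappa\le\gamma$ was already noted in the excerpt), we conclude $\Vert\Sigma^{(\lambda)}_{\rm I}\Vert\le1+\kappa\le2$, hence $\Vert\Sigma^{(\lambda)}_{\rm I}x\Vert^2\le4\Vert x\Vert^2$ (in fact the slightly sharper $(1+\kappa)^2\Vert x\Vert^2$). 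This mirrors the proof of Lemma~\ref{l: singular value of i.i.d. matrix}; indeed, one could instead write $\Sigma^{(\lambda)}_{\rm I}x=\mathbb{E}_{\rm I}\!\left[(\phi(s_t)^Tx)\phi(s_t)\right]-(1-\lambda)\sum_{k=0}^\infty\lambda^k\gamma^{k+1}\mathbb{E}_{\rm I}\!\left[(\phi(s_{t+k+1})^Tx)\phi(s_t)\right]$ and bound each expectation by $\Vert x\Vert$ via Cauchy--Schwarz and $\Vert\phi\Vert\le1$. I do not expect a genuine obstacle here: the only points needing care are the $m=0$ edge case and matching the geometric-series bookkeeping to the definition of $\kappa$, both of which are routine.
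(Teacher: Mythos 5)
Your proof is correct and is essentially the paper's argument: the paper writes $\Sigma_{\rm I}^{(\lambda)}x$ in its stationary-expectation form, bounds the identity term and each $P^{k+1}$ term by $\Vert x\Vert$ using $\Vert\phi(s)\Vert\le 1$, and sums the geometric series to get the same $(1+\kappa)\Vert x\Vert\le 2\Vert x\Vert$ bound you reach. Your termwise estimate $\Vert\Phi^T D P^m\Phi\Vert\le 1$ is just the matrix-notation restatement of that step (as your closing remark already observes), so there is no substantive difference.
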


For the rest of this section, we will also omit $i$ since the analysis holds for all $i \in \{1, 2, \ldots, n\}$. 

As before, we denote $\bar{x}'(\theta_t) = \mathbb{E}_{\rm M} \left[ x_{s,s'}(\theta_t, z_{0:t}) \right]$. This expectation assumes that $s_0 \sim \mu$ while the subsequent states are sampled according to the transition probability of the policy, i.e., $s_k \sim P(\cdot|s_{k-1})$. We call the quantity  $\bar{x}'(\theta_t) - \bar{x}(\theta_t)$ Markov noise. 

Finally, for the stationary point defined in (\ref{eq: stationary point for td lambda}), we have the following lemmas whose proof we postponed:

\begin{lemma} \label{l: bounded optimal point lambda}
In TD($\lambda$), $\theta^*$ satisfies $\Vert \theta^* \Vert \le R^{(\lambda)}$ with $R^{(\lambda)} := \frac{r_{\rm max}}{\omega_{\rm I}^{(\lambda)}(1-\gamma )}$.    
\end{lemma}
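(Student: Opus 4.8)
The plan is to follow the proof of Lemma \ref{l: bounded optimal point} almost verbatim, the only new element being a uniform bound on the $\mu$-weighted norm of $P^t R$. First I would separate the reward term from the $V$-term in the operator $T_{\pi}^{(\lambda)}$ of (\ref{eq: t lambda}): setting $b^{(\lambda)} := (1-\lambda)\sum_{k=0}^{\infty}\lambda^k\sum_{t=0}^{k}\gamma^t P^t R$ and $M^{(\lambda)} := (1-\lambda)\sum_{k=0}^{\infty}\lambda^k\gamma^{k+1}P^{k+1}$, one has $T_{\pi}^{(\lambda)}(\Phi\theta)-\Phi\theta = b^{(\lambda)} - (I-M^{(\lambda)})\Phi\theta$, so that $\bar{x}(\theta) = \Phi^T D b^{(\lambda)} - \Sigma_{\rm I}^{(\lambda)}\theta$, consistent with (\ref{eq: x bar}). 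Since Lemma \ref{l: smallest eigenvalue of i.i.d. matrix, lambda} makes the symmetric part of $\Sigma_{\rm I}^{(\lambda)}$ positive definite, $\Sigma_{\rm I}^{(\lambda)}$ is invertible, and solving $\bar{x}(\theta^*)=0$ gives the closed form $\theta^* = (\Sigma_{\rm I}^{(\lambda)})^{-1}\Phi^T D b^{(\lambda)}$.

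Next I would bound the two factors separately. For the inverse, Lemma \ref{l: smallest eigenvalue of i.i.d. matrix, lambda} together with Lemma 5.9 of \cite{olshevsky2023small} gives $\|(\Sigma_{\rm I}^{(\lambda)})^{-1}\| \le 1/\omega_{\rm I}^{(\lambda)}$, exactly as in the proof of Lemma \ref{l: bounded optimal point}. For the second factor, since $\|\phi(s)\| \le 1$ for all $s$ we have $\|\Phi^T\sqrt{D}\| \le 1$, hence $\|\Phi^T D b^{(\lambda)}\| = \|\Phi^T\sqrt{D}\cdot\sqrt{D}b^{(\lambda)}\| \le \|\sqrt{D}b^{(\lambda)}\|$.

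The one genuinely new step is to show $\|\sqrt{D}P^t R\| \le r_{\rm max}$ for every $t$. This follows from row-stochasticity of $P$ and Jensen's inequality: $(P^t R)(s)^2 \le \sum_{s'}P^t(s'|s)R(s')^2 \le r_{\rm max}^2$ by Assumption \ref{a: reward bound}, so $\|\sqrt{D}P^t R\|^2 = \sum_{s}\mu(s)(P^t R)(s)^2 \le r_{\rm max}^2$ (only $\sum_s\mu(s)=1$ is used here, not stationarity). The triangle inequality together with $(1-\lambda)\sum_{k}\lambda^k=1$ and $\sum_{t=0}^{k}\gamma^t\le 1/(1-\gamma)$ then give $\|\sqrt{D}b^{(\lambda)}\| \le \frac{r_{\rm max}}{1-\gamma}$, and combining the three bounds yields $\|\theta^*\| \le \frac{r_{\rm max}}{\omega_{\rm I}^{(\lambda)}(1-\gamma)} = R^{(\lambda)}$.

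I do not expect a serious obstacle here; the argument is the $\lambda$-analog of Lemma \ref{l: bounded optimal point}. The minor points requiring care are justifying the termwise norm bound for the infinite series defining $b^{(\lambda)}$ (immediate from absolute convergence) and keeping straight the opposite sign conventions used in (\ref{eq: g bar}) versus (\ref{eq: x bar}), so that the closed form for $\theta^*$ comes out with the correct sign; this does not affect the norm bound in any case.
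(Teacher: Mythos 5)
Your proposal is correct and follows essentially the same route as the paper: solve $\bar{x}(\theta^*)=0$ for the closed form $\theta^* = (\Sigma_{\rm I}^{(\lambda)})^{-1}\Phi^T D\,(1-\lambda)\sum_{k}\lambda^k\sum_{t=0}^k\gamma^t P^t R$, bound $\Vert(\Sigma_{\rm I}^{(\lambda)})^{-1}\Vert \le 1/\omega_{\rm I}^{(\lambda)}$ via Lemma \ref{l: smallest eigenvalue of i.i.d. matrix, lambda} and Lemma 5.9 of \cite{olshevsky2023small}, use $\Vert\Phi^T\sqrt{D}\Vert\le 1$, and bound the weighted reward series by $r_{\rm max}/(1-\gamma)$. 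Your Jensen/row-stochasticity argument merely fills in the norm bound that the paper asserts without detail, and your remark about the sign convention in (\ref{eq: x bar}) is apt but immaterial to the bound.
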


Now we are ready to begin the proof.

\begin{proof}[Proof of Theorem \ref{t: td lambda with markov}]

For simplicity, denote $\Delta_t = \bar{\theta}_t - \theta^*$. Assuming $t \ge t_0$, we take expectation on both sides in (\ref{eq: td lambda update}), 
\begin{align*}
    \Delta_{t+1} = \Delta_t + \alpha_t \mathbb{E} \left[ \bar{x}(\theta_t) \right] + \alpha_t \left( \mathbb{E} \left[ \bar{x}'(\theta_t) - \bar{x}(\theta_t) \right] \right). 
\end{align*}
By (\ref{eq: x bar}), we have $\mathbb{E} \left[ \bar{x}(\theta_t) \right] = \Sigma_{\rm I}^{(\lambda)} \Delta_t$. Notice that
\begin{align*}
    & \Vert (I - \alpha_t \Sigma^{(\lambda)}_{\rm I}) \Delta_t \Vert^2 \\ 
    = & \Vert \Delta_t \Vert^2 - \alpha_t \Delta_t^T ({\Sigma^{(\lambda)}_{\rm I}}^T + \Sigma^{(\lambda)}_{\rm I}) \Delta_t + \alpha_t^2 \Delta_t^T {\Sigma^{(\lambda)}_{\rm I}}^T \Sigma^{(\lambda)}_{\rm I} \Delta_t \\
    \le & (1-\omega^{(\lambda)}_{\rm I} \alpha_t) \Vert \Delta_t \Vert^2.
\end{align*}
Here we use both Lemma \ref{l: smallest eigenvalue of i.i.d. matrix, lambda}, \ref{l: singular value of i.i.d. matrix, lambda} and the fact that $4\alpha_t \le \omega^{(\lambda)}_{\rm I}$ (recall we assume $t \ge t^{(\lambda)}_0$). This immediately implies
$$
\Vert (I - \alpha_t \Sigma^{(\lambda)}_{\rm I}) \Delta_t \Vert \le (1-\omega^{(\lambda)}_{\rm I} \alpha_t/2) \Vert \Delta_t \Vert,
$$
where we use the fact $\sqrt{1-x} \le 1-x/2$. Therefore,
$$
\Vert \Delta_{t+1} \Vert \le (1-\omega^{(\lambda)}_{\rm I} \alpha_t/2) \Vert \Delta_t \Vert + \alpha_t \mathbb{E} \left[ \left\Vert \bar{x}'(\theta_t) - \bar{x}(\theta_t) \right\Vert \right].
$$
To deal with the Markov noise, for simplicity, we write 
\begin{equation}
\label{eq: markov noise, lambda}
\begin{aligned}
    \bar{x}(\theta) 
    = & \sum_{k=0}^{+\infty} (\gamma \lambda)^k \sum_{s_{t-k}} \mu(s_{t-k}) \phi(s_{t-k}) l_{t-k}(\theta) \\
    \bar{x}'(\theta) 
    = & \sum_{k=0}^{t} (\gamma \lambda)^k \sum_{s_{t-k}} P_{t-k}(s_{t-k}|s_0) \phi(s_{t-k}) l_{t-k}(\theta),
\end{aligned}
\end{equation}
where $l_{t-k}(\theta) = \mathbb{E}_{s \sim P_k(\cdot|s_{t-k}), s' \sim P(\cdot|s)} \left[ \delta_{s,s'}(\theta) \right]$. Let $l_\theta = | l_{t-k}(\theta) | / (1-\gamma \lambda)$. A simple bound for $l_\theta$ is
$$
l_\theta \le \frac{r_{\rm max} + 2 u^{(\lambda)} + 2R^{(\lambda)}}{1-\gamma \lambda},
$$
where we both use Lemma \ref{l: bounded optimal point lambda} and denote $u^{(\lambda)} = \max_{i,t} \mathbb{E} \Vert \theta_t(i) - \theta^* \Vert$. Notice that Lemma \ref{l: result of td lambda with markov} guarantees that $u^{(\lambda)}$ is finite. With these notations, we have
\begin{align*}
\bar{x}(\theta) - \bar{x}'(\theta) 
= & \sum_{k=0}^{t} (\gamma \lambda)^k \sum_{s_{t-k}} \left[  \mu(s_{t-k}) - P_{t-k}(s_{t-k}|s_0) \right] \phi(s_{t-k}) l_{t-k}(\theta) \\
& \quad + \sum_{k=t+1}^{+\infty} (\gamma \lambda)^k \sum_{s_{t-k}} \mu(s_{t-k}) \phi(s_{t-k}) l_{t-k}(\theta).
\end{align*}
We denote the first term as $I_{0:t}$ and divide it into two terms, $I_{0:\tau_{\rm mix}^{(\lambda)}}$ and $I_{\tau_{\rm mix}^{(\lambda)}+1:t}$. By Assumption \ref{a: uniform mixing}, 
$$
\left\Vert P_{t-k}(\cdot|s_0) - \mu \right\Vert_1 \le \alpha_t, \quad \forall t \le \tau_{\rm mix}^{(\lambda)},
$$
where we use the fact $t\ge 2 \tau^{(\lambda)}_{\rm mix}$. Therefore, 
\begin{align*}
    \mathbb{E} \left[ \Vert I_{0:\tau_{\rm mix}^{(\lambda)}} \Vert \right] 
    \le \mathbb{E} \left[ | l_{t-k}(\theta) | \sum_{k=0}^{\tau_{\rm mix}^{(\lambda)}} (\gamma \lambda)^k \cdot \left\Vert P_{t-k}(\cdot|s_0) - \mu \right\Vert_1  \right] 
    \le \mathbb{E} \left[ l_\theta \right] \alpha_t.
\end{align*}
By the definition of $\tau_{\rm mix}^{(\lambda)}$, $(\gamma \lambda)^{t} \le \alpha_t$. Therefore,
\begin{align*}
    \mathbb{E} \left[ \Vert I_{\tau_{\rm mix}^{(\lambda)}+1:t} \Vert \right] 
    \le \mathbb{E} \left[  | l_{t-k}(\theta) |  \mbox{\hspace{-9pt}}  \sum_{k=\tau_{\rm mix}^{(\lambda)}+1}^{t} \mbox{\hspace{-8pt}} (\gamma \lambda)^k \left\Vert P_{t-k}(\cdot|s_0) - \mu \right\Vert_1 \right] 
    \le 2 \mathbb{E} \left[l_\theta\right] \alpha_t.
\end{align*}
The second term (under expectation) also has upper-bound $\mathbb{E} \left[l_\theta \right] \alpha_t$ since $(\gamma \lambda)^{t} \le \alpha_t$ and the remaining terms are bounded by $\mathbb{E} \left[l_\theta\right]$. So far, we have
\begin{align*}
    \Vert \Delta_{t+1} \Vert 
    = (1-\alpha_t \omega_{\rm I}^{(\lambda)}/2)\Vert \Delta_t \Vert + \alpha_t^2 \frac{4}{1-\gamma \lambda} \cdot \left( r_{\rm max} + 2u^{(\lambda)} + 2R^{(\lambda)} \right),     
\end{align*}
We set $x_t = \Vert \Delta_t \Vert, a = \omega^{(\lambda)}_{\rm I}/(\omega(1-\kappa)), b^2 = 4 \left( r_{\rm max} + 2u^{(\lambda)} + 2 R^{(\lambda)} \right)/(1-\gamma \lambda), c=1$ and $\tau = t_0^{(\lambda)}$. By Lemma \ref{l: decaying random variable 2},
$$
||\Delta_t|| \le \frac{\nu^{(\lambda)}}{1 + t}, \quad \nu^{(\lambda)} = \max \left\{ \alpha^{(\lambda)}, \beta^{(\lambda)} \right\}
$$
where 
\begin{align*}
    \alpha^{(\lambda)} =  \frac{4r_{\rm max} + 8u^{(\lambda)} + 8 R^{(\lambda)}}{(\frac{\omega_{\rm I}^{(\lambda)}}{\omega(1-\kappa)}-1)(1-\gamma \lambda)}, \quad \beta^{(\lambda)} = (1 + t_0^{(\lambda)}) \Vert \Delta_{t_0} \Vert.
\end{align*}
Since all the above facts hold for every agent $i$, the result directly follows after plugging the above fact as well as Lemma \ref{l: result of td lambda with markov} into (\ref{eq: distributed analysis}).
\end{proof}

\subsection{Proof of Lemma \ref{l: smallest eigenvalue of i.i.d. matrix, lambda}}

\begin{proof}
As pointed out in Theorem 2 in \cite{liu2021temporal}, $x^T \Sigma_{\rm I}^{(\lambda)} x$ equals to a convex combination of $D$-norm and Dirichlet semi-norm. Since Dirichlet semi-norm is always no less than zero, we have
$$
x^T \Sigma_{\rm I}^{(\lambda)} x \ge (1-\kappa)x^T \Phi^T D \Phi x, \quad \forall x \in \mathbb{R}^d,
$$
which implies $\omega^{(\lambda)}_{\rm I} \ge (1-\kappa) \omega$.
\end{proof}

\subsection{Proof of Lemma \ref{l: singular value of i.i.d. matrix, lambda}}

\begin{proof}
According to the definition of $\mathbb{E}_{\rm I}$ after (\ref{eq: t lambda}),
$$
\Sigma_{\rm I}^{(\lambda)} x = \mathbb{E}_{\rm I} \left[ \left( \phi(s_0)^T x - (1-\lambda) \sum_{k=0}^\infty \lambda^k \gamma^{k+1} \phi(s_{k+1})^T x \right) \phi(s_0) \right].
$$
Since $\Vert \phi(s) \Vert \le 1$ and $\kappa \le 1$,
\begin{align*}
    \Vert \Sigma_{\rm I}^{(\lambda)} x \Vert
    \le ( 1+(1-\lambda) \sum_{k=0}^\infty \lambda^k \gamma^{k+1} ) \Vert x \Vert = (1+\kappa) \Vert x \Vert \le 2 \Vert x \Vert.
\end{align*}
\end{proof}

\subsection{Proof  of Lemma \ref{l: bounded optimal point lambda}}

\begin{proof}
Solving for $\theta^*$ using (\ref{eq: t lambda}) and (\ref{eq: stationary point for td lambda}), we  arrive at
$$
\theta^* = \left(\Sigma_{\rm I}^{(\lambda)}\right)^{-1} \Phi^T D (1-\lambda) \sum_{k=0}^\infty \lambda^k \sum_{t=0}^k \gamma^t P^t R.
$$
By Lemma 5.9 in \cite{olshevsky2023small} and Lemma \ref{l: smallest eigenvalue of i.i.d. matrix, lambda}, $\Vert \left( \Sigma_{\rm I}^{(\lambda)} \right)^{-1} \Vert \le {\omega^{(\lambda)}_{\rm I}}^{-1}$. Furthermore, since $\Vert\Phi^T \sqrt{D}\Vert^2 \le 1$ which is because all features vectors have norm at most one by assumption, and $\Vert \sqrt{D} (1-\lambda) \sum_{k=0}^\infty \lambda^k \sum_{t=0}^k \gamma^t P^t R \Vert^2 \le r_{\rm max}^2/(1-\gamma)^2$, we obtain $\Vert \theta^* \Vert \le \frac{r_{\rm max}}{\omega_{\rm I}^{(\lambda)}(1-\gamma)}$.

\end{proof}

\section{Proof of Lemma \ref{l: result of linear td with markov} and Lemma \ref{l: result of td lambda with markov}}

In \cite{bhandari2018finite}, Lemma \ref{l: result of linear td with markov} and \ref{l: result of td lambda with markov} are both proved under a projected TD Learning setting. In this section, we will give the proof for both lemmas when the projection is removed. 

In this section, we denote $\theta_t$ as the parameters in step $t$ and omit $i$ since the analysis holds for all $i \in \{1, 2, \ldots, n\}$. With some abuse of notations, we also denote $\Delta_t = \theta_t-\theta^*$ (instead of $\bar{\theta}_t-\theta^*$ in previous sections). 

In TD(0), we also mark that $\sigma := \max_{s_t,s_{t+1}} \Vert g_{s_t, s_{t+1}}(\theta^*) \Vert$. This term is finite since $\Vert \theta^* \Vert$ is both finite. To show this, recall
$$
g_{s_t, s_{t+1}}(\theta^*) = \left( r_t + \gamma \phi(s_{t+1}) \theta^* - \phi(s_t) \theta^* \right) \phi(s_t).
$$
Therefore, 
$$
\Vert g_{s_t, s_{t+1}}(\theta^*) \Vert \le r_{\rm max} +2 \Vert \theta^* \Vert \le r_{\rm max} + 2R
$$
where $R$ is given in Lemma \ref{l: bounded optimal point}. 

In TD($\lambda$), we will mark $\sigma^{(\lambda)} := \max_{s_t,s_{t+1}} \Vert x_{s_t, s_{t+1}}(\theta^*, z_{0:t}) \Vert$. We next show that this term is also finite. Recall 
$$
x_{s_t, s_{t+1}}(\theta^*, z_{0:t}) = \left( r_t + \gamma \phi(s_{t+1}) \theta^* - \phi(s_t) \theta^* \right) \sum_{k=0}^t (\gamma \lambda)^k \phi(s_{t-k}).
$$
Therefore, 
$$
\Vert g_{s_t, s_{t+1}}(\theta^*) \Vert \le \frac{r_{\rm max} +2 \Vert \theta^* \Vert}{1-\gamma \lambda} \le \frac{r_{\rm max} + 2R^{(\lambda)}}{1-\gamma \lambda}
$$
where $R^{(\lambda)}$ is given in Lemma \ref{l: bounded optimal point lambda}. 

\subsection{Proof of Lemma \ref{l: result of linear td with markov}}

\begin{proof}

Suppose $t \ge t_{\rm th}$. TD(0) update implies
$$
\mathbb{E} \left[ \Vert \Delta_{t+1} \Vert^2 \right] = \mathbb{E} \left[ \Vert \Delta_t \Vert^2 \right] + 2 \alpha_t \mathbb{E} \left[ \Delta_{t}^T \bar{g}(\theta_t) \right] + \alpha_t^2 \mathbb{E} \left[ \Vert g_{s_t, s_{t+1}}(\theta_t) \Vert^2 \right] + 2 \alpha_t \mathbb{E} \left[ \Delta_{t}^T \left(  g_{s_t, s_{t+1}}(\theta_t) - \bar{g}(\theta_t) \right) \right]. 
$$
For the second term on the right hand side, by (\ref{eq: x bar}) and Lemma \ref{l: singular value of i.i.d. matrix, lambda}, we conclude that
$$
\mathbb{E} \left[ \Delta_t^T \bar{g}(\theta_t) \right] \le - (1-\gamma) \omega \mathbb{E} \left[ \Vert \Delta_t \Vert^2 \right].
$$
For the third term on the right hand side, notice that
$$
\Vert g_{s_t, s_{t+1}}(\theta_t) \Vert^2 \le 2 \Vert g_{s_t, s_{t+1}}(\theta_t) - g_{s_t, s_{t+1}}(\theta^*) \Vert^2 + 2 \Vert g_{s_t, s_{t+1}}(\theta^*) \Vert^2.
$$
We bound the first term on the right hand side using the following: 
$$
\Vert g_{s_t, s_{t+1}}(\theta_t) - g_{s_t, s_{t+1}}(\theta^*) \Vert^2 = \left\Vert \left( \left(\gamma \phi(s_{t+1}) - \phi(s_t) \right)^T \Delta_t  \right) \phi(s_t) \right\Vert \le 4 \Vert \Delta_t \Vert^2.
$$
Therefore, 
$$
\mathbb{E} \left[ \Vert g_{s_t, s_{t+1}}(\theta_t) \Vert^2 \right] \le 8 \mathbb{E} \left[ \Vert \Delta_t \Vert^2 \right] + 2\sigma^2
$$
For the fourth term on the right hand side, from the above analysis, we obtain a quick result
$$
\Vert g_{s_t, s_{t+1}}(\theta_t) \Vert \le 4 \Vert \Delta_t \Vert + 2 \sigma.
$$
Recall the definition of $\bar{g}$ and $\bar{g}'$ in (\ref{eq: markov noise}), 
$$
\mathbb{E} \left[ g_{s_t, s_{t+1}}(\theta_t) - \bar{g}(\theta_t) | \Delta_t \right] = \bar{g}'(\theta_t) - \bar{g}(\theta_t) = \sum_{s_t, s_{t+1}} \left( P_t(s_t|s_0) - \mu(s_t) \right) P(s_{t+1}|s_t) g_{s_t, s_{t+1}}(\theta_t).
$$
Therefore, 
\begin{align*}
    \mathbb{E} \left[ \Delta_{t}^T \left(  g_{s_t, s_{t+1}}(\theta_t) - \bar{g}(\theta_t) \right) \right] 
    = & \mathbb{E} \left[ \Delta_{t}^T \cdot \mathbb{E} \left[ g_{s_t, s_{t+1}}(\theta_t) - \bar{g}(\theta_t) | \Delta_t \right] \right] \\
    = & \mathbb{E} \left[ \Vert \Delta_{t} \Vert \cdot \mathbb{E} \left[ \Vert g_{s_t, s_{t+1}}(\theta_t) - \bar{g}(\theta_t) \Vert | \Delta_t \right] \right] \\
    \le & \mathbb{E} \left[ \Vert \Delta_t \Vert \cdot \Vert P_t(\cdot|s_0) - \mu \Vert_1 \cdot \left( 4 \Vert \Delta_t \Vert + 2 \sigma \right)  \right] \\
    \le & 4 \alpha_t \mathbb{E} \left[ \Vert \Delta_t \Vert^2 \right] + 2 \sigma \alpha_t \mathbb{E} \left[ \Vert \Delta_t \Vert \right] \\
    \le & 4 \alpha_t \mathbb{E} \left[ \Vert \Delta_t \Vert^2 \right] + \alpha_t \sigma^2 + \alpha_t \mathbb{E} \left[ \Vert \Delta_t \Vert^2 \right] \\
    = & 5 \alpha_t \mathbb{E} \left[ \Vert \Delta_t \Vert^2 \right] + \alpha_t \sigma^2
\end{align*}
where the fourth line uses Assumption \ref{a: uniform mixing} since we are assuming $t \ge \tau_{\rm mix}$. Now combine the results we have so far, 
$$
\mathbb{E} \left[ \Vert \Delta_{t+1} \Vert^2 \right] \le \left( 1 - 2 (1-\gamma) \omega \alpha_t + 18 \alpha_t^2 \right) \mathbb{E} \left[ \Vert \Delta_t \Vert^2 \right] + 4 \alpha_t^2 \sigma^2. 
$$
If $\alpha_t \le (1-\gamma)\omega/9$ (which is true since we are assuming $t \ge \frac{18}{(1-\gamma)^2 \omega^2}-1$), we know
$$
\mathbb{E} \left[ \Vert \Delta_{t+1} \Vert^2 \right] = \left( 1-\alpha_t(1-\gamma) \omega \right) \mathbb{E}\left[ \Vert \Delta_t \Vert^2 \right] + 4 \alpha_t^2 \sigma^2. 
$$
Therefore, we are now able to use Lemma \ref{l: decaying random variable 2}. We set $x_t = \mathbb{E} \left[ \Vert \Delta_t \Vert^2 \right], a = 2, b^2 = 16 \sigma^2 / (\omega^2 (1-\gamma)^2), c = 1$ and $\tau = \tau_{\rm mix}$. By Lemma \ref{l: decaying random variable 2}, 
$$
\mathbb{E} \left[ \Vert\Delta_t \Vert^2 \right] \le \frac{\nu}{t+1}, \quad \nu = \max \{ \frac{16 \sigma^2}{\omega^2 (1-\gamma^2)}, \left( \tau_{\rm mix} + 1 \right) \mathbb{E} \left[ \Vert \Delta_{\tau_{\rm mix}} \Vert^2 \right] \}.
$$

\end{proof}

\subsection{Proof of Lemma \ref{l: result of linear td with markov}}

\begin{proof}
Suppose $t \ge t_{\rm th}^{(\lambda)}$. TD($\lambda$) update implies
$$
\mathbb{E} \left[ \Vert \Delta_{t+1} \Vert^2 \right] = \mathbb{E} \left[ \Vert \Delta_t \Vert^2 \right] + 2 \alpha_t \mathbb{E} \left[ \Delta_{t}^T \bar{x}(\theta_t) \right] + \alpha_t^2 \mathbb{E} \left[ \Vert x_{s_t, s_{t+1}}(\theta_t, z_{0:t}) \Vert^2 \right] + 2 \alpha_t \mathbb{E} \left[ \Delta_{t}^T \left(  x_{s_t, s_{t+1}}(\theta_t, z_{0:t}) - \bar{x}(\theta_t) \right) \right]. 
$$
For the second term on the right hand side, by (\ref{eq: g bar}) and Lemma \ref{l: singular value of i.i.d. matrix}, we conclude that
$$
\mathbb{E} \left[ \Delta_t^T \bar{x}(\theta_t) \right] \le - (1-\kappa) \omega \mathbb{E} \left[ \Vert \Delta_t \Vert^2 \right].
$$
For the third term on the right hand side, notice that
$$
\Vert x_{s_t, s_{t+1}}(\theta_t, z_{0:t}) \Vert^2 \le 2 \Vert x_{s_t, s_{t+1}}(\theta_t, z_{0:t}) - x_{s_t, s_{t+1}}(\theta^*, z_{0:t}) \Vert^2 + 2 \Vert x_{s_t, s_{t+1}}(\theta^*, z_{0:t}) \Vert^2.
$$
We bound the first term on the right hand side using the following: 
$$
\Vert x_{s_t, s_{t+1}}(\theta_t, z_{0:t}) - x_{s_t, s_{t+1}}(\theta^*, z_{0:t}) \Vert^2 = \left\Vert \left( \left(\gamma \phi(s_{t+1}) - \phi(s_t) \right)^T \Delta_t  \right) \sum_{k=0}^{t} (\gamma \lambda)^k \phi(s_{t-k}) \right\Vert \le \frac{4}{1-\gamma \lambda} \Vert \Delta_t \Vert^2.
$$
Therefore, 
$$
\mathbb{E} \left[ \Vert x_{s_t, s_{t+1}}(\theta_t, z_{0:t}) \Vert^2 \right] \le \frac{8}{1-\gamma \lambda} \mathbb{E} \left[ \Vert \Delta_t \Vert^2 \right] + 2{\sigma^{(\lambda)}}^2
$$
For the fourth term on the right hand side, from the above analysis (and the fact $1-\gamma \lambda \le 1$), we obtain a quick result
$$
\Vert x_{s_t, s_{t+1}}(\theta_t, z_{0:t}) \Vert \le \frac{4}{1-\gamma \lambda} \Vert \Delta_t \Vert + 2 {\sigma^{(\lambda)}}.
$$
Recall the definition of $\bar{x}$ in (\ref{eq: markov noise, lambda}), 
\begin{align*}
    & \mathbb{E} \left[ x_{s_t, s_{t+1}}(\theta_t, z_{0:t}) - \bar{x}(\theta_t) | \Delta_t \right] \\
    = & \sum_{s_t, s_{t+1}} P_t(s_t|s_0) P(s_{t+1}|s_t) x_{s_t, s_{t+1}}(\theta_t, z_{0:t}) - \sum_{s_t, s_{t+1}} \mu(s_t) P(s_{t+1}|s_t) x_{s_t, s_{t+1}}(\theta_t, z_{-\infty:t}) \\
    = & \underbrace{\sum_{s_t, s_{t+1}} \left( P_t(s_t|s_0) - \mu(s_t) \right) P(s_{t+1}|s_t) x_{s_t, s_{t+1}}(\theta_t, z_{0:t})}_{I_1} - \underbrace{ (\gamma \lambda)^{t} \sum_{s_t, s_{t+1}} \mu(s_t) P(s_{t+1}|s_t) x_{s_t, s_{t+1}}(\theta_t, z_{-\infty:-1}) }_{I_2}.
\end{align*}
where we denote $z_{-\infty:-1} = \sum_{k=1}^{\infty} (\gamma \lambda)^{k} \phi(s_{-k})$. Notice that one can easily show $x_{s_t, s_{t+1}}(\theta_t, z_{-\infty:-1})$ has the same upper-bound as $x_{s_t, s_{t+1}}(\theta_t, z_{0:t})$. 

We find that $I_1$ satisfies
$$
\Vert I_1 \Vert \le \Vert P_t(\cdot | s_0) - \mu \Vert_1 \cdot \Vert x_{s_t, s_{t+1}}(\theta_t, z_{0:t}) \Vert \le \frac{4}{1-\gamma \lambda} \alpha_t \Vert \Delta_t \Vert + 2 \alpha_t {\sigma^{(\lambda)}}
$$
and $I_2$ satisfies
$$
\Vert I_2 \Vert \le (\gamma \lambda)^t \cdot \Vert x_{s_t, s_{t+1}}(\theta_t, z_{-\infty:-1}) \Vert \le \frac{4}{1-\gamma \lambda} \alpha_t \Vert \Delta_t \Vert + 2 \alpha_t {\sigma^{(\lambda)}}.
$$
Here, in both inequalities we use $t \ge \tau_{\rm mix}^{(\lambda)}$ and, as a result, $\Vert P_t(\cdot | s_0) - \mu \Vert_1 \le \alpha_t$ and $(\gamma \lambda)^t \le \alpha_t$. Therefore, 
\begin{align*}
    \mathbb{E} \left[ \Delta_{t}^T \left(  x_{s_t, s_{t+1}}(\theta_t, z_{0:t}) - \bar{x}(\theta_t) \right) \right] 
    = & \mathbb{E} \left[ \Delta_{t}^T \cdot \mathbb{E} \left[ x_{s_t, s_{t+1}}(\theta_t, z_{0:t}) - \bar{x}(\theta_t) | \Delta_t \right] \right] \\
    \le & \mathbb{E} \left[ \Vert \Delta_{t} \Vert \cdot \mathbb{E} \left[ \Vert x_{s_t, s_{t+1}}(\theta_t, z_{0:t}) - \bar{x}(\theta_t) \Vert | \Delta_t \right] \right] \\
    \le & \mathbb{E} \left[ \Vert \Delta_t \Vert \cdot \left( \Vert I_1 \Vert + \Vert I_2 \Vert \right) \right] \\
    \le & \frac{8}{1-\gamma \lambda} \alpha_t \mathbb{E} \left[ \Vert \Delta_t \Vert^2 \right] + \frac{4}{1-\gamma \lambda} {\sigma^{(\lambda)}} \alpha_t \mathbb{E} \left[ \Vert \Delta_t \Vert \right] \\
    \le & \frac{8}{1-\gamma \lambda} \alpha_t \mathbb{E} \left[ \Vert \Delta_t \Vert^2 \right] + \frac{2}{1-\gamma \lambda} \alpha_t {\sigma^{(\lambda)}}^2 + \frac{2}{1-\gamma \lambda} \alpha_t \mathbb{E} \left[ \Vert \Delta_t \Vert^2 \right] \\
    = & \frac{10}{1-\gamma \lambda} \alpha_t \mathbb{E} \left[ \Vert \Delta_t \Vert^2 \right] + \frac{2}{1-\gamma \lambda} \alpha_t {\sigma^{(\lambda)}}^2
\end{align*}
Now combine the results we have so far (and the fact $1-\gamma \lambda \le 1$), 
$$
\mathbb{E} \left[ \Vert \Delta_{t+1} \Vert^2 \right] \le \left( 1 - 2 (1-\kappa) \omega \alpha_t + \frac{28}{1-\gamma \lambda} \alpha_t^2 \right) \mathbb{E} \left[ \Vert \Delta_t \Vert^2 \right] + \frac{6}{1-\gamma \lambda} \alpha_t^2 {\sigma^{(\lambda)}}^2. 
$$
If $\alpha_t \le (1-\kappa)(1-\gamma \lambda)\omega/14$ (which is true since we are assuming $t \ge \frac{28}{(1-\kappa)^2\omega^2(1-\gamma \lambda)}-1$), we know
$$
\mathbb{E} \left[ \Vert \Delta_{t+1} \Vert^2 \right] = \left( 1-\alpha_t(1-\kappa) \omega \right) \mathbb{E}\left[ \Vert \Delta_t \Vert^2 \right] + \frac{6}{1-\gamma \lambda} \alpha_t^2 {\sigma^{(\lambda)}}^2. 
$$
Therefore, we are now able to use Lemma \ref{l: decaying random variable 2}. We set $x_t = \mathbb{E} \left[ \Vert \Delta_t \Vert^2 \right], a = 2, b^2 = 36 {\sigma^{(\lambda)}}^2 / (\omega^2 (1-\kappa)^2(1-\gamma \lambda)^2), c = 1$ and $\tau = \tau_{\rm mix}$. By Lemma \ref{l: decaying random variable 2}, 
$$
\mathbb{E} \left[ \Vert\Delta_t \Vert^2 \right] \le \frac{\nu}{t+1}, \quad \nu = \max \{ \frac{16 {\sigma^{(\lambda)}}^2}{\omega^2 (1-\kappa^2) (1-\gamma \lambda)^2}, \left( \tau_{\rm mix} + 1 \right) \mathbb{E} \left[ \Vert \Delta_{\tau_{\rm mix}} \Vert^2 \right] \}.
$$
\end{proof}

\section{Numerical Results}

In this section, we provide some numerical results. The goal is to shown that Multi-agent TD Learning does improve the convergence. 

We produce simulations on a five-state Random Walk task (Example 6.2 in \cite{sutton2018reinforcement}). In this task, there are five states in a row, named A, B, C, D, and E, from left to right. All episodes start in the center state, C, then proceed either left or right by one state on each step, with equal probability. Episodes terminate either on the extreme left or the extreme right. When an episode terminates on the right, a reward of +1 occurs; all other rewards are zero. It is easy to show that the true value of all the states, A through E, are 1/6, 2/6, 3/6, 4/6, and 5/6. 

We test both Multi-agent TD(0) and Multi-agent TD($\lambda$) on this task. For each algorithm, we run a total of 100 episodes. The stepsize is chosen to be $1\times 10^{-3}$. The number of agent, $N$, is chosen to be 1, 2, 4, and 8 in each task. 

\begin{figure}
     \centering
     \begin{subfigure}[b]{0.45\textwidth}
         \centering
         \includegraphics[width=\textwidth]{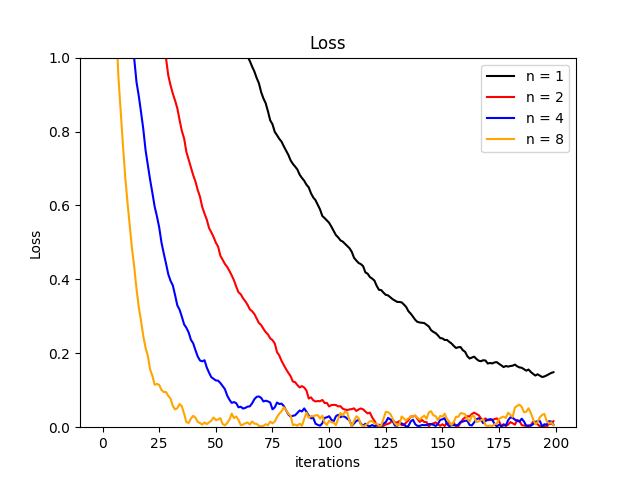}
         \caption{Learning curve}
     \end{subfigure}
     \begin{subfigure}[b]{0.45\textwidth}
         \centering
         \includegraphics[width=\textwidth]{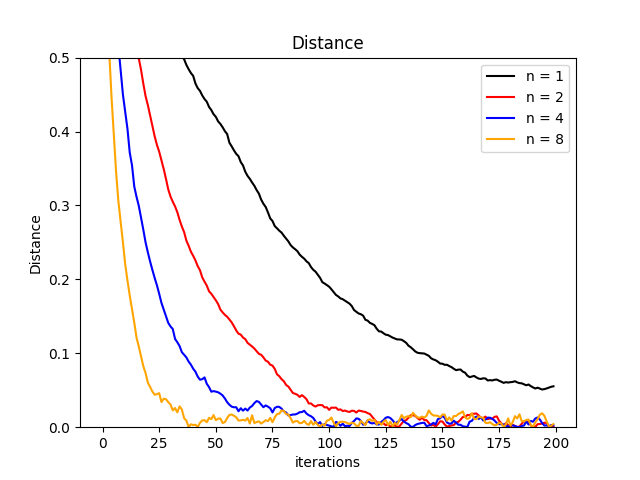}
         \caption{Distances}
     \end{subfigure}
    \caption{Numerical results for TD(0)}
    \label{fig: td 0}
\end{figure}

Figure \ref{fig: td 0} shows the result for Multi-agent TD(0) and Figure \ref{fig: td lambda} shows the result for Multi-agent TD($\lambda$). The figures on the left show the learning curves for different number of agents, which is denoted by n. In these figures, the $x$-axis plots the episodes while the $y$-axis plots the root mean-squared (RMS) error between the value function learned and the true value function, averaged over the five states, then averaged over $N$ agents. This figure shows Multi-agent TD Learning converges faster than single-agent TD Learning. 

\begin{figure}
     \centering
     \begin{subfigure}[b]{0.45\textwidth}
         \centering
         \includegraphics[width=\textwidth]{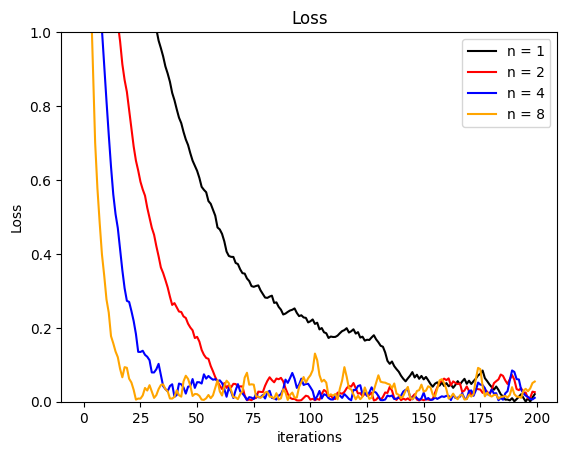}
         \caption{Learning curve}
     \end{subfigure}
     \begin{subfigure}[b]{0.45\textwidth}
         \centering
         \includegraphics[width=\textwidth]{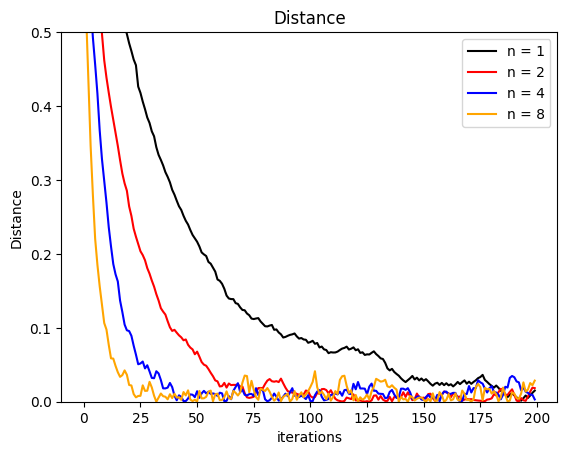}
         \caption{Distances}
     \end{subfigure}
    \caption{Numerical results for TD($\lambda$)}
    \label{fig: td lambda}
\end{figure}

In the right figures, the $x$-axis plots the episodes while the $y$-axis plots the Euclidean distance between the averaged parameters $\bar{\theta}$ and the stationary point $\theta^*$. These figures show that the distance for Multi-agent TD Learning is basically $N$ times less than the distance for single-agent TD Learning, which basically matches what the theorem we propose in our paper.

\section{Conclusions} 

We have shown that one-shot averaging suffices to give a linear speedup for distributed TD($\lambda$) under Markov sampling. This is an improvement over previous works, which had alternatively either $O(T)$ communication rounds per $T$ steps or $O(N)$ averaging rounds per $T$ steps to achieve the same. An open question is whether a similar result can be proven for tabular Q-learning.

\bibliography{iclr2023_conference}

\begin{thebibliography}{19}
\providecommand{\natexlab}[1]{#1}
\providecommand{\url}[1]{\texttt{#1}}
\expandafter\ifx\csname urlstyle\endcsname\relax
  \providecommand{\doi}[1]{doi: #1}\else
  \providecommand{\doi}{doi: \begingroup \urlstyle{rm}\Url}\fi

\bibitem[Bertsekas \& Tsitsiklis(1996)Bertsekas and Tsitsiklis]{bertsekas1996neuro}
Dimitri Bertsekas and John~N Tsitsiklis.
\newblock \emph{Neuro-dynamic programming}.
\newblock Athena Scientific, 1996.

\bibitem[Bhandari et~al.(2018)Bhandari, Russo, and Singal]{bhandari2018finite}
Jalaj Bhandari, Daniel Russo, and Raghav Singal.
\newblock A finite time analysis of temporal difference learning with linear function approximation.
\newblock In \emph{Conference on learning theory}, pp.\  1691--1692. PMLR, 2018.

\bibitem[Doan et~al.(2019)Doan, Maguluri, and Romberg]{doan2019finite}
Thinh Doan, Siva Maguluri, and Justin Romberg.
\newblock Finite-time analysis of distributed {TD}(0) with linear function approximation on multi-agent reinforcement learning.
\newblock In \emph{International Conference on Machine Learning}, pp.\  1626--1635. PMLR, 2019.

\bibitem[Gantmacher(1964)]{gantmacher1964theory}
FR~Gantmacher.
\newblock The theory of matrices.
\newblock \emph{New York}, 1964.

\bibitem[Khodadadian et~al.(2022)Khodadadian, Sharma, Joshi, and Maguluri]{khodadadian2022federated}
Sajad Khodadadian, Pranay Sharma, Gauri Joshi, and Siva~Theja Maguluri.
\newblock Federated reinforcement learning: Linear speedup under {Markovian} sampling.
\newblock In \emph{International Conference on Machine Learning}, pp.\  10997--11057. PMLR, 2022.

\bibitem[LeCun et~al.(2015)LeCun, Bengio, and Hinton]{lecun2015deep}
Yann LeCun, Yoshua Bengio, and Geoffrey Hinton.
\newblock Deep learning.
\newblock \emph{nature}, 521\penalty0 (7553):\penalty0 436--444, 2015.

\bibitem[Levin \& Peres(2017)Levin and Peres]{levin2017markov}
David~A Levin and Yuval Peres.
\newblock \emph{Markov chains and mixing times}, volume 107.
\newblock American Mathematical Soc., 2017.

\bibitem[Liu \& Olshevsky(2021)Liu and Olshevsky]{liu2021temporal}
Rui Liu and Alex Olshevsky.
\newblock Temporal difference learning as gradient splitting.
\newblock In \emph{International Conference on Machine Learning}, pp.\  6905--6913. PMLR, 2021.

\bibitem[Liu \& Olshevsky(2023)Liu and Olshevsky]{liu2023distributed}
Rui Liu and Alex Olshevsky.
\newblock Distributed {TD}(0) with almost no communication.
\newblock \emph{IEEE Control Systems Letters}, 2023.

\bibitem[Mnih et~al.(2016)Mnih, Badia, Mirza, Graves, Lillicrap, Harley, Silver, and Kavukcuoglu]{mnih2016asynchronous}
Volodymyr Mnih, Adria~Puigdomenech Badia, Mehdi Mirza, Alex Graves, Timothy Lillicrap, Tim Harley, David Silver, and Koray Kavukcuoglu.
\newblock Asynchronous methods for deep reinforcement learning.
\newblock In \emph{International conference on machine learning}, pp.\  1928--1937. PMLR, 2016.

\bibitem[Olshevsky \& Gharesifard(2023)Olshevsky and Gharesifard]{olshevsky2023small}
Alex Olshevsky and Bahman Gharesifard.
\newblock A small gain analysis of single timescale actor critic.
\newblock \emph{SIAM Journal on Control and Optimization}, 61\penalty0 (2):\penalty0 980--1007, 2023.

\bibitem[Silver et~al.(2017)Silver, Schrittwieser, Simonyan, Antonoglou, Huang, Guez, Hubert, Baker, Lai, Bolton, et~al.]{silver2017mastering}
David Silver, Julian Schrittwieser, Karen Simonyan, Ioannis Antonoglou, Aja Huang, Arthur Guez, Thomas Hubert, Lucas Baker, Matthew Lai, Adrian Bolton, et~al.
\newblock Mastering the game of go without human knowledge.
\newblock \emph{nature}, 550\penalty0 (7676):\penalty0 354--359, 2017.

\bibitem[Sun et~al.(2020)Sun, Wang, Giannakis, Yang, and Yang]{sun2020finite}
Jun Sun, Gang Wang, Georgios~B Giannakis, Qinmin Yang, and Zaiyue Yang.
\newblock Finite-time analysis of decentralized temporal-difference learning with linear function approximation.
\newblock In \emph{International Conference on Artificial Intelligence and Statistics}, pp.\  4485--4495. PMLR, 2020.

\bibitem[Sutton \& Barto(2018)Sutton and Barto]{sutton2018reinforcement}
Richard~S Sutton and Andrew~G Barto.
\newblock \emph{Reinforcement learning: An introduction}.
\newblock MIT press, 2018.

\bibitem[Tian et~al.(2022)Tian, Paschalidis, and Olshevsky]{tian2022performance}
Haoxing Tian, Ioannis~C. Paschalidis, and Alex Olshevsky.
\newblock On the performance of temporal difference learning with neural networks.
\newblock In \emph{The Eleventh International Conference on Learning Representations}, 2022.

\bibitem[Tsitsiklis \& Van~Roy(1996)Tsitsiklis and Van~Roy]{tsitsiklis1996analysis-nips}
John Tsitsiklis and Benjamin Van~Roy.
\newblock Analysis of temporal-diffference learning with function approximation.
\newblock \emph{Advances in Neural Information Processing Systems}, 9, 1996.

\bibitem[Wang et~al.(2020)Wang, Lu, Giannakis, Tesauro, and Sun]{wang2020decentralized}
Gang Wang, Songtao Lu, Georgios Giannakis, Gerald Tesauro, and Jian Sun.
\newblock Decentralized {TD} tracking with linear function approximation and its finite-time analysis.
\newblock \emph{Advances in Neural Information Processing Systems}, 33:\penalty0 13762--13772, 2020.

\bibitem[Wang et~al.(2023)Wang, Mitra, Hassani, Pappas, and Anderson]{wang2023federated}
Han Wang, Aritra Mitra, Hamed Hassani, George~J Pappas, and James Anderson.
\newblock Federated temporal difference learning with linear function approximation under environmental heterogeneity.
\newblock \emph{arXiv preprint arXiv:2302.02212}, 2023.

\bibitem[Zhang et~al.(2024)Zhang, Wang, Mitra, and Anderson]{zhang2024finite}
Chenyu Zhang, Han Wang, Aritra Mitra, and James Anderson.
\newblock Finite-time analysis of on-policy heterogeneous federated reinforcement learning.
\newblock \emph{arXiv preprint arXiv:2401.15273}, 2024.

\end{thebibliography}
\bibliographystyle{iclr2023_conference}

\newpage

\end{document}